\documentclass[preprint,12pt,3p,authoryear,nopreprintline]{elsarticle}

\usepackage{amssymb}
\usepackage{amsmath}
\usepackage{amsthm}

\usepackage{lineno}

\usepackage{mathtools}
\usepackage{multirow}
\usepackage{microtype}
\usepackage{graphicx}
\usepackage{booktabs}

\usepackage{algorithm}
\usepackage{algorithmic}

\usepackage[table]{xcolor}

\usepackage{subcaption}

\theoremstyle{plain}
\newtheorem{theorem}{Theorem} 

\allowdisplaybreaks[4]

\usepackage{hyperref}

\begin{document}

\begin{frontmatter}

\title{FES-FM: Free Energy Surface Sampling via Reduced Flow Matching} 

\author[a]{Zichen Liu}
\author[a,d,e]{Tiejun Li \corref{cor}}
\affiliation[a]{organization={Center for Data Science, Peking University}}
\affiliation[d]{organization={LMAM and School of Mathematical Sciences, Peking University}}
\affiliation[e]{organization={Center for Machine Learning Research, Peking University}}
\cortext[cor]{Corresponding author. E-mail address: tieli@pku.edu.cn}

\begin{abstract}
Sampling the distribution of collective variables (CVs) and estimating the associated free energy surface are crucial problems in statistical physics, as they underpin a better understanding of chemical reactions and conformational transitions. Traditional methods usually rely on simulations in high-dimensional configuration space and project the resulting configurations onto the CV space. To improve sampling speed, we propose FES-FM, a reduced flow matching (FM) method for free energy surface (FES) sampling. We train a dynamical transport map in the CV space, thereby enabling direct sampling of CV distributions and reconstruction of the corresponding free energy surface. For many-particle systems, we construct a prior distribution based on the Hessian at a local minimum of the potential, which ensures both rotation-translation invariance and physically meaningful configurations. We evaluate the proposed method across a variety of potential functions and collective variables, including alanine dipeptide in implicit solvent as a molecular benchmark. Comparative experiments demonstrate that our approach significantly improves sampling speed while maintaining accuracy.
\end{abstract}

\begin{keyword}
  free energy surface \sep flow matching \sep Boltzmann sampling \sep annealed importance sampling
  \end{keyword}
  \end{frontmatter}
  
\section{Introduction}

Boltzmann sampling is a central task in statistical mechanics, where the goal is to sample from a target probability density defined as $p(x) = Z^{-1} e^{-U_{\mathrm{target}}(x)}$. Here, $U_{\mathrm{target}}(x)$ represents an unnormalized potential energy, and $ Z = \int_{\mathbb{R}^n} e^{-U_{\mathrm{target}}(x)} \mathrm{d}x $ is a normalization constant (partition function) that is typically intractable to compute. For simplicity, we absorb the inverse temperature into $U_{\mathrm{target}}$.

In many molecular systems, the quantities of interest are low-dimensional descriptors known as collective variables (CVs). CVs are physically meaningful functions of the high-dimensional configuration variables and are designed to capture the progress of rare events such as chemical reactions and conformational transitions. As a result, an important goal beyond sampling configurations is to characterize the distribution of CVs and the associated free energy surface.

The CV map is denoted by $\xi:\mathbb{R}^n\to\mathbb{R}^d$ with $d\ll n$, which projects a high-dimensional configuration into the low-dimensional CV space $\mathbb{R}^d$. Our objective is, given the mapping $\xi(x)$, to construct a sampler for the induced CV random variable $Y=\xi(X)$ where $X\sim p(x)$. Assuming $\xi$ is sufficiently regular (e.g., $\xi\in C^1$), $\nabla \xi\in \mathbb{R}^{n\times d}$ and $\operatorname{rank}(\nabla \xi(x))=d$ in the region of interest, the density $\rho(y)$ of $Y$ can be expressed as
\begin{align}
\rho(y)=& \int p(x) \delta(\xi(x)-y) \mathrm{d}x = \int_{\Sigma_{y}} p(x) |J_{\xi}(x)|^{-1} \mathrm{d} \sigma_{\Sigma_{y}}(x).
\end{align}
Here, $|J_{\xi}(x)|=\det(\nabla \xi(x)^{T} \nabla \xi(x))^{\frac{1}{2}}$ denotes the normal Jacobian in the coarea formula. $\Sigma_y = \{ x \in \mathbb{R}^n : \xi(x) = y \} $ is the $y$-level set of $\xi$, and $ \sigma_{\Sigma_y} $ is the surface measure on $ \Sigma_y $ induced by the Euclidean metric on $\mathbb{R}^n$. The free energy is defined as $F(y) = -\log \rho(y)$ and $F(y)$ (or even its unnormalized counterpart) is generally intractable to evaluate. \textbf{Our goal is to directly generate samples $Y\sim \rho(y)$ in the CV space.}

Traditional numerical simulation methods estimate $\rho(y)$ by simulating long trajectories in the original configuration space, or in CV space through analytically derived reduced dynamics \cite{stoltz2010free, frenkel2023understanding}, which can be expensive in rare-event regimes and for high-dimensional systems. To improve sampling speed, we propose \textbf{FES-FM}, a reduced flow matching method for free energy sampling. We learn a dynamical transport in the CV space that pushes a simple prior distribution toward the target CV distribution $\rho(y)$. Accordingly, we derive the training objective based on the transport equation satisfied by the transport velocity and combine it with Jarzynski-type non-equilibrium reweighting. Once trained, FES-FM generates CV samples by evolving the reduced dynamics, avoiding full-space simulation during generation and thereby accelerating sampling while maintaining accuracy. Our contributions are summarized as follows.

\begin{itemize}
\item We introduce a reduced-space flow matching framework that learns a transport map in CV space to sample the free energy surface directly, avoiding full-space simulation during generation.
\item For many-particle systems, we propose a Hessian-informed harmonic prior distribution, whose samples vibrate near a local minimum of the potential energy. This distribution not only possesses $\mathrm{E}(3)$-invariance but also ensures that its samples correspond to physically meaningful configurations.
\item We evaluate FES-FM on a suite of benchmark potentials. We compare against full-space generative baselines that generate the Boltzmann distribution and obtain samples via the CV map. Our method significantly improves sampling speed while maintaining accuracy.
\end{itemize}

\section{Related work}

\subsection{Machine-learning methods for Boltzmann sampling} 

While our goal is to sample the free energy surface in the CV space, we summarize prior work focusing on Boltzmann sampling in the configuration space.

Several recent works develop Boltzmann samplers by constructing a transport map for the target distribution, combined with the Physics-Informed Neural Network (PINN, \citet{raissi2019physics}) framework.
\citet{chemseddine2024neural} propose an interpolation strategy that only parametrizes the potential function while fixing an appropriate velocity field. \citet{sun2024dynamical} consider both deterministic and stochastic dynamical transport maps, and study the optimal transport map via a learnable potential interpolation. 
Non-Equilibrium Transport Sampler \citep{albergo2024nets} leverages non-equilibrium sampling to construct training objectives and generates samples for optimization.

Normalizing flow \citep{noe2019boltzmann} and continuous normalizing flow \citep{zhang2018monge} are trained via maximum likelihood to match the target Boltzmann distribution instead of the intermediate processes. \citet{zhang2022path,vargas2023denoising,richter2024improved,pmlr-v267-havens25a,chen2024sequential,berner2022optimal} formulate learning diffusion processes as stochastic optimal control.
Furthermore, Adjoint Matching \citep{domingo-enrich2025adjoint,liu2025adjoint} casts stochastic optimal control problems as regression problems.
\citet{Phillips2024ParticleDD} and \citet{Bortoli2024TargetSM} also show that regression-based objectives can be effective, whereas \citet{AkhoundSadegh2024IteratedDE} and \citet{woo2024iteratedenergybasedflowmatching} introduce an offline approach that learns the score using samples stored in a replay buffer.
Energy-based models \citep{Plainer2025ConsistentSA,wang2025energy} are widely used tools for sampling, while the Jarzynski equality is also used in energy-based models \citep{Carbone2023EfficientTO} and the Helmholtz free energy calculation \citep{he2025feat}.

\subsection{Classical methods for free energy surface sampling} 

Umbrella sampling \citep{torrie1977nonphysical} applies biases in separate windows to obtain the globally unbiased distribution of the CVs. Metadynamics \citep{laio2002escaping} directly compensates free energy barriers with accumulated Gaussian biases, and uses the accumulated bias to produce the free energy surface. \citet{maragliano2006temperature} construct an extended system where CVs are treated as dynamical variables to sample the free energy surface. 
Adaptive Biasing Force \citep{comer2015adaptive} adaptively estimates the mean force to build a flattening bias along CVs, whereas Variational Enhanced Sampling \citep{valsson2014variational} optimizes a bias potential variationally to enforce a target CV distribution and recover the free energy surface.
These classical strategies typically rely on simulations in the configuration space. In light of this, we aim to train a model that requires only the simulation of the reduced dynamics in the CV space.

For additional background on free energy computation and Boltzmann sampling, see \citet{stoltz2010free} and \citet{frenkel2023understanding}.

\section{Background and preliminaries}\label{Background}

In this section, we first review flow-matching Boltzmann samplers \citep{sun2024dynamical,chemseddine2024neural}, and then present the Non-Equilibrium Transport Sampler (NETS, \citet{albergo2024nets}), which provides a key building block for our algorithm.

\subsection{Boltzmann sampling via flow matching}

\citet{sun2024dynamical,chemseddine2024neural} propose to sample the target distribution by learning an ordinary differential equation (ODE) based transport map from a simple prior, where the time-dependent velocity field is parameterized by a neural network and trained using PINN objectives.
We denote the density of the prior distribution as $p(x,0)=Z_0^{-1}e^{-U_0(x)}$. A common choice for the prior is the Gaussian distribution, i.e., $U_0(x) = \frac{1}{2}\|x\|^2$. For $t \in [0,1]$, we define the linear interpolation
\begin{equation}
U(x,t) = (1 - t) U_{0}(x) + t U_{\mathrm{target}}(x),
\end{equation}
and the corresponding distribution is $\mathrm{d} \nu_{t}(x) =p(x, t) \mathrm{d}x$, where
\begin{equation}
\quad p(x,t) = \frac{1}{Z(t)} e^{-U(x,t)}, \quad Z(t) = \int_{\mathbb{R}^{n}} e^{-U(x,t)} \mathrm{d}x.
\end{equation}
Other types of interpolation are considered in \citet{sun2024dynamical,chemseddine2024neural}. The goal of the flow-matching based Boltzmann sampler is to find a velocity field $b(x, t)$ such that the marginal distribution of the ODE solution:
\begin{equation}\label{ode_full}
\frac{\mathrm{d}}{\mathrm{d}t} X_t= b(X_t, t)
\end{equation}
satisfies $X_t\sim p(x, t)$ for all $t \in [0, 1]$. In particular, we have $X_1 \sim p(x)$. Thus, once such a $b(x, t)$ is obtained, we can sample from the target density $p(x)$ by solving the ODE~\eqref{ode_full}.

The velocity field $b(x, t)$ and the distribution $p(x, t)$ are related through the Liouville equation, i.e. $\partial_t p(x,t) + \nabla \cdot (p(x,t) b(x,t)) = 0$, and this equation can be reformulated in terms of the potential $U(x, t)$ as:
\begin{equation}\label{PINN_full_point}
\partial_t U(x,t) + b(x,t)\cdot \nabla U(x,t) - \nabla \cdot b(x,t) + \partial_t \log Z(t) =0.
\end{equation}
In this paper, for a function like $b(x,t)$ depending on both space and time, $\nabla$ denotes the gradient operator with respect to the spatial variables, i.e., $\nabla=(\partial_{x_1},\partial_{x_2},\dots,\partial_{x_n})^T$. To approximate $b(x, t)$ and $\partial_t \log Z(t)$, we parameterize them using neural networks $b_{\theta_0}(x, t)$ and $c_{\theta_0}(t)$, respectively, whose parameters are optimized by minimizing a loss function derived from the PINN framework. The loss function is given by:
\begin{align}
& \int_{0}^{1} \int_{\mathbb{R}^n} \big| \partial_t U(x,t) + b_{\theta_0}(x,t)\cdot \nabla U(x,t) - \nabla \cdot b_{\theta_0}(x,t) +  c_{\theta_0}(t) \big|^2 \hat{p}(x,t) \mathrm{d}x \mathrm{d}t.
\end{align}
Since the training objective is to ensure that \eqref{PINN_full_point} holds pointwise, the choice of $\hat{p}(x,t)$ can be arbitrary as long as it covers the desired approximation region. 

We remark that the \textit{flow matching} mentioned above is different from the flow matching in generative tasks \citep{lipman2023flow,liu2022flow,albergo2022building}. The method discussed here performs interpolation on the potential functions, whereas flow matching in generative tasks performs interpolation on samples. However, since both methods aim to find a flow map to match the predefined marginal distribution, we also refer to the method here as flow matching.

\subsection{Expectation estimation via the non-equilibrium state}\label{sec:expectation_estimation}
To accurately assign the statistical weight $\hat{p}(x,t)$ to target regions where the probability mass is transported by $b(x,t)$, NETS proposes setting $\hat{p}(x,t)=p(x,t)$, which places the statistical weight exactly in these critical regions. Under this setting and noting that $\mathrm{d} \nu_{t}(x) =p(x, t) \mathrm{d}x$, the training loss can be formulated as:
\begin{align}\label{b_loss}
\mathcal{L}_0[b_{\theta_0}, c_{\theta_0}] =& \int_{0}^{1}\mathbb{E}_{\nu_{t}} \big| b_{\theta_0}(x,t)\cdot \nabla U(x,t) +\partial_t U(x,t)  - \nabla \cdot b_{\theta_0}(x,t) +  c_{\theta_0}(t) \big|^2 \mathrm{d}t.
\end{align}
Direct sampling from $\nu_{t}$ is generally intractable, since it is precisely the target of the Boltzmann sampler. 

NETS \citep{albergo2024nets} uses the following time-continuous variant of annealed importance sampling based on the Jarzynski equality, to estimate expectations under $\nu_{t}$. For a given velocity $\hat{b}(x,t)\in \mathbb{R}^n$, let $(X_t^{\hat{b}}, A_t^{\hat{b}})$ solve the coupled system:
\begin{align}
&\mathrm{d} X_t^{\hat{b}} = -\epsilon_t \nabla U(X_t^{\hat{b}},t)\mathrm{d}t + \hat{b}(X_t^{\hat{b}},t)\mathrm{d}t + \sqrt{2\epsilon_t} \mathrm{d} W_t,  \label{non_equil_X}\\
&\mathrm{d} A_t^{\hat{b}} = \nabla\cdot \hat{b}(X_t^{\hat{b}},t)\mathrm{d}t - \nabla U(X_t^{\hat{b}},t)\cdot \hat{b}(X_t^{\hat{b}},t)\mathrm{d}t - \partial_t U(X_t^{\hat{b}},t)\mathrm{d}t, \label{non_equil_A}
\end{align}
with initial conditions $X_0^{\hat{b}} \sim \nu_0$, $A_0^{\hat{b}} = 0$. Here $\epsilon_t$ is a time-dependent diffusion coefficient, and $W_t$ is the standard Wiener process. Then, for any $t\in[0,1]$ and test function $h:\mathbb{R}^n \to \mathbb{R}$, we have
\begin{equation}\label{Jar_reweight}
\mathbb{E}_{\nu_{t}}h(x)=\int h(x)p(x,t)\mathrm{d}x = \frac{\mathbb{E}[e^{A_t^{\hat{b}}}h(X_t^{\hat{b}})]}{\mathbb{E}[e^{A_t^{\hat{b}}}]},
\end{equation}
where the expectations in the last term are taken with respect to the law of $(X_t^{\hat{b}}, A_t^{\hat{b}})$. It also follows that
\begin{equation}\label{Jar_eq}
\log Z(t) -\log Z(0)=\log \mathbb{E}[e^{A_t^{\hat{b}}}],
\end{equation}
which is referred to as the Jarzynski equality. 
See \citet[Proposition~2.4]{albergo2024nets} for details, and refer to \citet{PhysRevLettVaikuntanathan,pmlr-v235-tian24c,vargas2024transport} for related work.

Equation \eqref{Jar_reweight} gives an exact reweighting identity for expectations under $\nu_t$, regardless of the choice of $\hat{b}$. In practice, the expectations on the right-hand side are approximated using Monte Carlo trajectories, yielding a self-normalized importance sampling estimator. The variance of this estimator depends on the mismatch between the target density $p(\cdot,t)$ and the marginal law induced by $X_t^{\hat{b}}$; consequently, a poorly chosen (e.g., random) $\hat{b}$ can produce highly variable, potentially degenerate weights. If $\hat{b}$ satisfies the inhomogeneous transport equation \eqref{PINN_full_point}, the path weights $\exp(A_t^{\hat{b}})$ become constant across trajectories, resulting in a zero-variance reweighting. 
NETS proposes parameterizing $\hat{b}=b_\theta$ and optimizing $\theta$ so that the variance of the resulting Monte Carlo estimators is progressively reduced during training. 
This requires simulating trajectories of the coupled non-equilibrium dynamics in \eqref{non_equil_X}--\eqref{non_equil_A}; see lines 3--8 of Algorithm~1 in \citet{albergo2024nets} for the simulation procedure. In particular, when the drift is chosen as the zero field, $\hat{b}\equiv 0$, the resulting weight degeneracy can lead to a large estimator variance. To reduce the variance in such cases, we use an optional resampling variant in which particles are periodically resampled according to their normalized Jarzynski weights; see \citep[Algorithm~1]{tan2025scalable}. This variant is used in the alanine dipeptide experiment in Section~\ref{Sec:alanine}.

\section{Methods}

To construct a generator for the target density $\rho(y)$, we develop a reduced dynamical system defined on the CV space. Specifically, we map the mathematical objects from Section \ref{Background} to their counterparts in the reduced model, and derive the partial differential equation (PDE) satisfied by the velocity of this reduced dynamics.

\subsection{Reduced dynamics from the transport map in the configuration space}

For $X_t \sim p(x,t)$, the probability density of the random variable $Y_t=\xi(X_t)$ can be written as
\begin{equation}
\rho(y,t) = \int_{\Sigma_{y}} p(x,t) |J_{\xi}(x)|^{-1} \mathrm{d} \sigma_{\Sigma_{y}}(x).
\end{equation}
We denote the corresponding distribution by $\mathrm{d} \mu_{t}(y)  =\rho(y,t) \mathrm{d}y$. Following the idea in flow matching, we aim to find a velocity field $u(y,t)$ such that the marginal distribution of the ODE solution:
\begin{equation}\label{ode_red}
    \frac{\mathrm{d}}{\mathrm{d}t} Y_t= u(Y_t, t)
\end{equation}
satisfies $Y_t\sim \rho(y,t)$ for all $t \in [0, 1]$. In particular, we have $Y_1 \sim \rho(y)$. Thus, once $u(y,t)$ is obtained, sampling from the target distribution can be achieved by solving the ODE \eqref{ode_red} starting with $Y_0=\xi(X_0)$, where $X_0\sim p(x,0)$ is drawn from the prior distribution.

The velocity field $u(y,t)$ and the density $\rho(y,t)$ are related through the Liouville equation, i.e. $\partial_t \rho(y,t) + \nabla \cdot (\rho(y,t) u(y,t)) = 0$. Let $F(y,t) = -\log \rho(y,t)$ represent the free energy at time $t$. Similar to \eqref{PINN_full_point}, the Liouville equation can be rewritten as
\begin{equation}\label{PINN_red_point}
\partial_t F(y,t) + u(y,t) \cdot \nabla F(y,t) - \nabla \cdot u(y,t) =0.
\end{equation}

\begin{figure}[htbp]
\centering
\begin{subfigure}[b]{0.3\textwidth}
\centering
\includegraphics[width=0.98\linewidth]{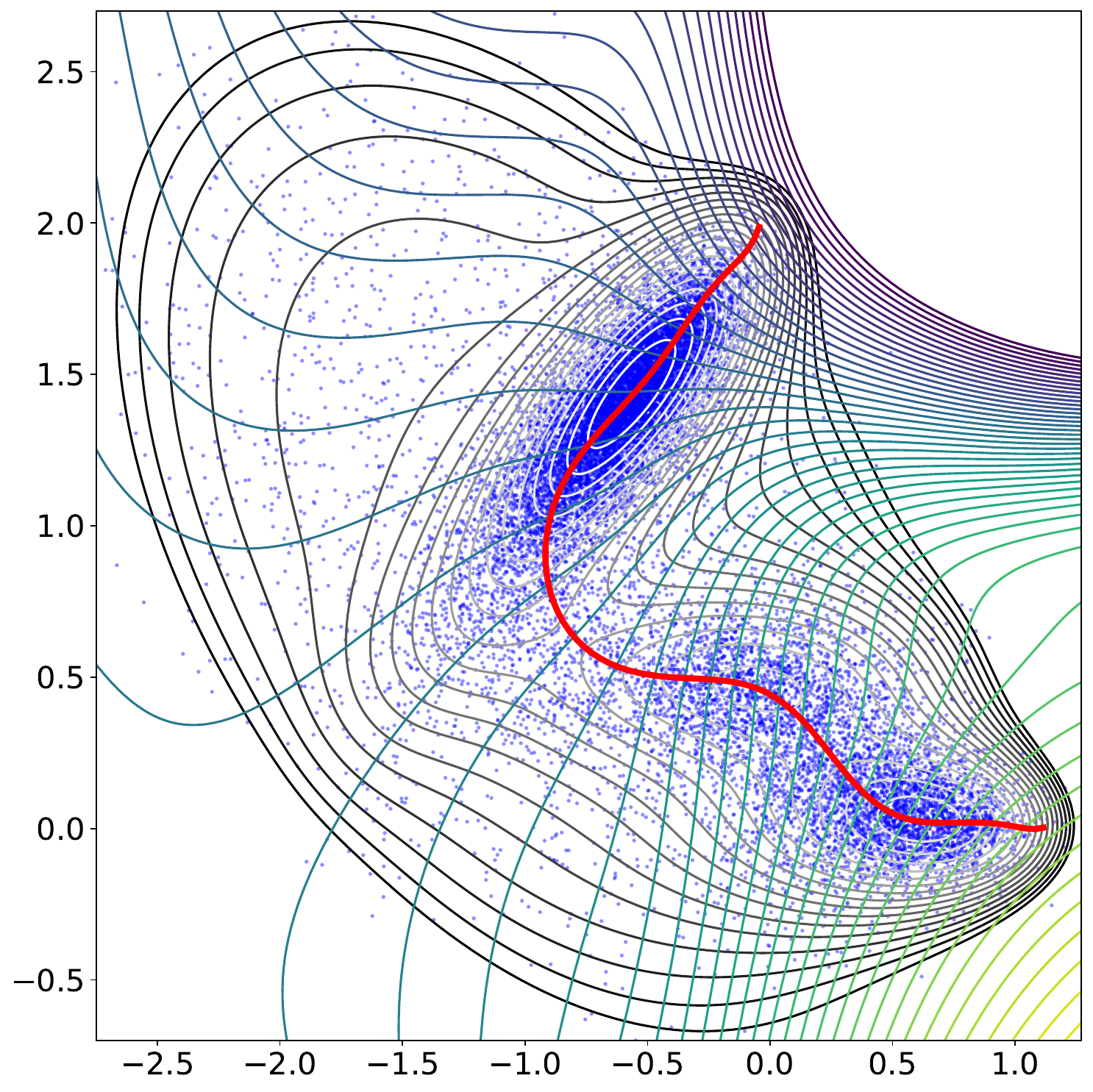}
\caption{Landscape and learned CV}\label{fig:MB_CV}
\end{subfigure}
\begin{subfigure}[b]{0.3\textwidth}
\centering
\includegraphics[width=0.98\linewidth]{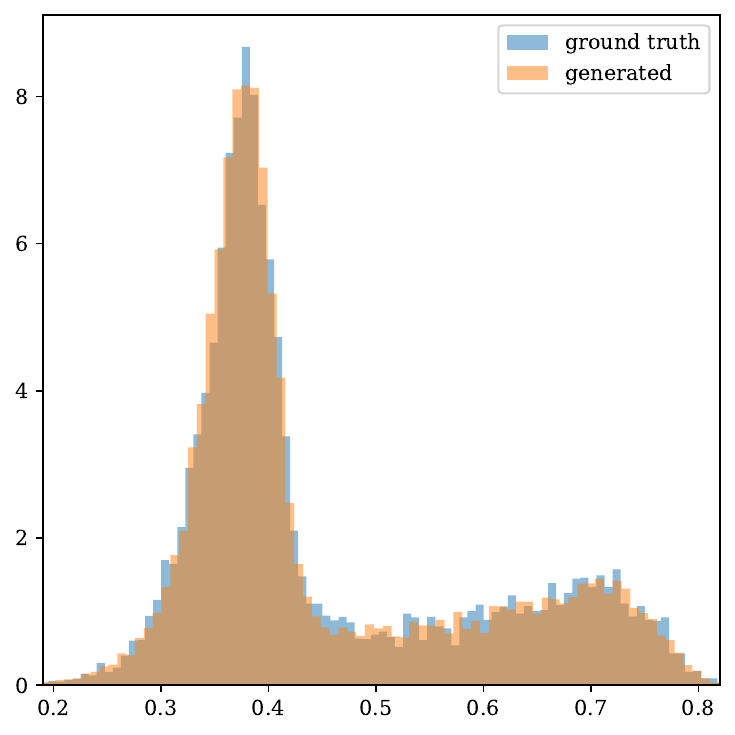}
\caption{NETS-P}\label{fig:MB_full}
\end{subfigure}
\begin{subfigure}[b]{0.3\textwidth}
\centering
\includegraphics[width=0.98\linewidth]{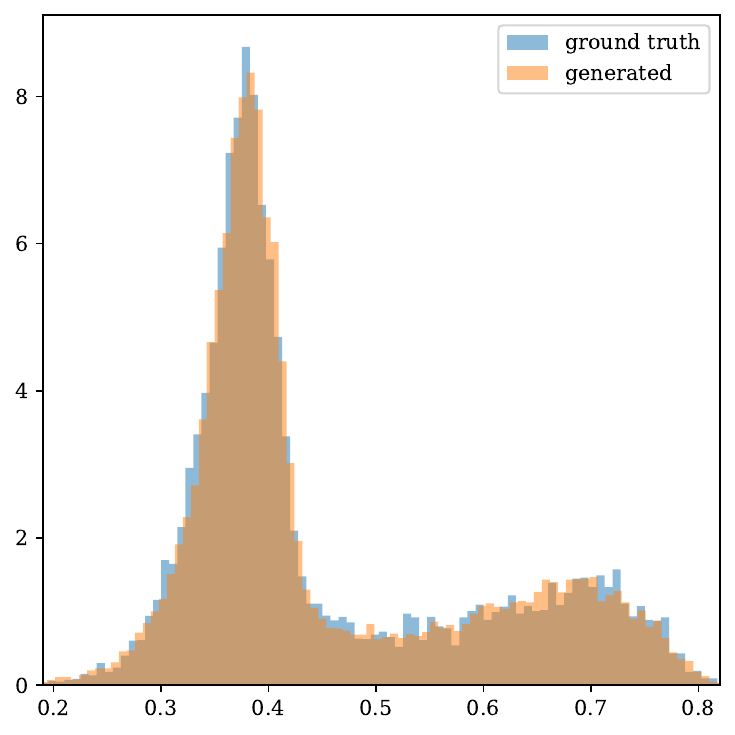}
\caption{FES-FM}\label{fig:MB_red}
\end{subfigure}
\caption{Results for Müller-Brown potential: (a) Illustration of the Müller-Brown potential, where the red curve denotes the transition path calculated via the string method. The blue dots represent samples from the Langevin dynamics, and we regard these samples as the ground truth distribution. The black contour lines correspond to the contours of the Müller-Brown potential, while the contour lines colored by the \textit{viridis} colormap indicate the contours of the collective variable. (b) Empirical densities of CV samples obtained by projecting NETS samples (blue) and the ground truth (red). (c) Empirical densities of the CV from our FES-FM (blue) and the ground truth (red).}
\end{figure}

\subsection{Inhomogeneous transport equation for the reduced dynamics}

The derivatives $\partial_t F(y,t)$ and $\nabla F(y,t)$ in \eqref{PINN_red_point} can be expressed as expectations with respect to the time-dependent measure $ \mu_{\Sigma_{y},t}$ on $\Sigma_{y}$:
\begin{equation}\label{manifold_dense_t}
\mathrm{d} \mu_{\Sigma_{y},t}(x) = \frac{1}{\rho(y,t)} \cdot p(x,t) |J_{\xi}(x)|^{-1} \mathrm{d} \sigma_{\Sigma_{y}}(x). 
\end{equation}
In particular, $\nabla F(y,t)$ can be computed as 
\begin{equation}\label{mean_force_identity}
\nabla F(y,t) = \mathbb{E}_{\mu_{\Sigma_{y},t}} D(x,t),
\end{equation}
where $D(x,t)$, known as the local mean force \citep{stoltz2010free}, is the difference of the force exerted on the system along the CVs and a term related to the curvature of the manifold $\xi(x)=y$ \citep{ciccotti2008projection,stoltz2010free}. For $1\leq i \leq d$, its $i$-th component is given by
\begin{equation}
\begin{aligned}
& D_{i}(x,t) = \sum_{j=1}^{n}\nabla \xi(x)^{\dagger}_{i,j} \partial_{x_j} U(x,t) -  \sum_{j=1}^{n}\partial_{x_{j}}\nabla \xi(x)^{\dagger}_{i,j},
\end{aligned}
\end{equation}
where $\nabla \xi(x)^{\dagger}=(\nabla \xi(x)^{T} \nabla \xi(x))^{-1}\nabla \xi(x)^T\in \mathbb{R}^{d\times n}$ represents the Moore-Penrose inverse \citep{Penrose_1955} of $\nabla \xi(x)$. Moreover, we can verify the following expression for $\partial_t  F(y,t)$:
\begin{equation}\label{partial_t_F}
\partial_t F(y,t) = \mathbb{E}_{\mu_{\Sigma_{y},t}} [\partial_{t} U(x,t)] + \partial_{t} \log Z(t).
\end{equation}
Therefore, the inhomogeneous transport equation \eqref{PINN_red_point} becomes 
\begin{align}
& \mathbb{E}_{\mu_{\Sigma_{y},t}} \partial_{t} U(x,t) +u(y,t) \cdot \mathbb{E}_{\mu_{\Sigma_{y},t}} D(x,t)  - \nabla \cdot u(y,t) + \partial_{t} \log Z(t)  =0.
\end{align}

We parametrize the time-dependent velocity field $u$ by a neural network $u_{\theta_1}$. Since $\partial_{t} \log Z(t)$ is an intractable function, we also use a neural network $c_{\theta_1}$ to approximate it. For any $y$ and $t$, we aim to find the vector field $u_{\theta_1}$ and $c_{\theta_1}$ such that
\begin{align}\label{PINN_red_ori}
&\mathbb{E}_{\mu_{\Sigma_{y},t}} \partial_{t} U(x,t) +u_{\theta_1}(y,t) \cdot \mathbb{E}_{\mu_{\Sigma_{y},t}} D(x,t) - \nabla \cdot u_{\theta_1}(y,t) + c_{\theta_1}(t)=0.
\end{align}

However, the computation of expectations with respect to $\mu_{\Sigma_{y},t}$ in \eqref{PINN_red_ori} is challenging, due to the difficulty in obtaining samples from the distribution supported on the manifold $\Sigma_{y}$. Therefore, we cannot directly define the PINN loss in the same way as \eqref{b_loss}. Appendix~\ref{compare_summary} provides a complete comparison list for the full NETS model and the reduced FES-FM model.

\subsection{Training loss}

To find $u_{\theta_1}$ and $c_{\theta_1}$ satisfying \eqref{PINN_red_ori}, we leverage the properties of expectations to manipulate the integral term with respect to $\mu_{\Sigma_{y},t}$. We first introduce an auxiliary neural network $v_{\theta_1}(y,t)$ to approximate $\mathbb{E}_{\mu_{\Sigma_{y},t}} D(x,t)$. For any $y,t$, we define $\ell_1$ as follows:
\begin{align}
& \ell_1[v_{\theta_1}](y,t) = \mathbb{E}_{\mu_{\Sigma_{y},t}}  \big\| D(x,t)-v_{\theta_1}(y,t) \big\|^{2} .
\end{align}
Note that the expectation $\mathbb{E}_{\mu_{\Sigma_{y},t}}$ is taken with respect to $x$ for fixed $y$ and $t$. By the property of squared-loss minimization, the pointwise minimizer of $\ell_1[v_{\theta_1}](y,t)$ is the mean of $D(x,t)$ under $\mu_{\Sigma_{y},t}$, i.e.,
\begin{equation}\label{minimizer_l1}
v_{\theta_1}(y,t)= \mathbb{E}_{\mu_{\Sigma_{y},t}} D(x,t).
\end{equation}

Subsequently, we define
\begin{align}
&\ell_2[u_{\theta_1},c_{\theta_1},v_{\theta_1}](y,t) =\mathbb{E}_{\mu_{\Sigma_{y},t}}\big| \partial_{t} U(x,t) +u_{\theta_1}(y,t) \cdot v_{\theta_1}(y,t) - \nabla \cdot u_{\theta_1}(y,t) + c_{\theta_1}(t) \big|^{2},
\end{align}
and its pointwise minimizer with respect to $u_{\theta_1}(y,t)$, $c_{\theta_1}(t)$, and $v_{\theta_1}(y,t)$ satisfies
\begin{align}
&\mathbb{E}_{\mu_{\Sigma_{y},t}}\partial_{t} U(x,t)  = - u_{\theta_1}(y,t) \cdot v_{\theta_1}(y,t)  + \nabla \cdot u_{\theta_1}(y,t)- c_{\theta_1}(t). \label{minimizer_l2}
\end{align}

Note that the intersection of the solution sets of \eqref{minimizer_l1} and \eqref{minimizer_l2} is non-empty, and this intersection is exactly the solution set of the transport equation \eqref{PINN_red_ori}. Therefore, the optimal $u_{\theta_1}(y,t),c_{\theta_1}(t),v_{\theta_1}(y,t)$ that minimize $\ell_1+\lambda\ell_2$ ($\lambda > 0$) satisfies both \eqref{minimizer_l1} and \eqref{minimizer_l2} simultaneously, and hence satisfies \eqref{PINN_red_ori}. Based on this fact, we define the following loss function
\begin{align}\label{loss_u_ori}
&\mathcal{L}_1[u_{\theta_1},c_{\theta_1},v_{\theta_1}] =  \int_{0}^{1}\mathbb{E}_{\mu_{t}} \big[ \ell_1[v_{\theta_1}](y,t) \big] \mathrm{d}t  + \lambda \int_{0}^{1}\mathbb{E}_{\mu_{t}} \big[ \ell_2[u_{\theta_1},c_{\theta_1},v_{\theta_1}](y,t) \big] \mathrm{d}t. 
\end{align}

Note that $\mu_{\Sigma_{y},t}$ can be regarded as the conditional distribution of $\nu_{t}$ given $\xi(x)=y$, such that $\mathbb{E}_{\mu_{t}}\mathbb{E}_{\mu_{\Sigma_{y},t}} = \mathbb{E}_{\nu_{t}}$ holds. We can thus rewrite the loss function $\mathcal{L}_1$ as
\begin{align}\label{loss_u}
\mathcal{L}_1[u_{\theta_1},c_{\theta_1},v_{\theta_1}] =& \int_{0}^{1}\mathbb{E}_{\nu_{t}} \big[ \mathcal{R}[u_{\theta_1},c_{\theta_1},v_{\theta_1}](x,t)\big] \mathrm{d}t,
\end{align}
where
\begin{align}
\mathcal{R}[u_{\theta_1},c_{\theta_1},v_{\theta_1}](x,t) = &
\lambda \big| \partial_{t} U(x,t) +u_{\theta_1}(\xi(x),t) \cdot v_{\theta_1}(\xi(x),t) - \nabla \cdot u_{\theta_1}(\xi(x),t) + c_{\theta_1}(t) \big|^{2} \notag \\
&+ \big\| D(x,t)-v_{\theta_1}(\xi(x),t) \big\|^{2} ,
\end{align}
and the expectation $\mathbb{E}_{\nu_{t}}[\cdot]$ can be estimated via \eqref{Jar_reweight}.

\subsection{Training pipeline}

Algorithm~\ref{algo_train} summarizes the main training process, where a fixed drift $\hat{b}$ is used in the non-equilibrium dynamics; if a warm-up process is used, $\hat{b}$ is obtained by Algorithm~\ref{algo_warmup} in Appendix~\ref{sec:warmup_process}, and otherwise $\hat{b}$ is specified directly.

Similar to NETS, simulating the trajectories of $X_t,A_t$ using \eqref{non_equil_X} and \eqref{non_equil_A} is required, and $X_t,A_t$ are detached from the computational graph when taking a gradient step in $\mathcal{L}_1$. The loss \eqref{loss_u} is optimized by leveraging \eqref{Jar_reweight} to compute the reweighting factor. Algorithm~\ref{algo_gen} summarizes the sampling procedure.

\begin{algorithm}[ht]
\caption{FES-FM: Training process}\label{algo_train}
\begin{algorithmic}[1]
\STATE \textbf{Initialize:} neural networks $u_{\theta_1}$, $c_{\theta_1}$, $v_{\theta_1}$; a fixed drift $\hat{b}$; training epochs $N_{\mathrm{epoch}}$; time steps $K$
\FOR{epoch=$1, \dots, N_{\mathrm{epoch}}$}
    \STATE Generate trajectories $\{(X_{t_k},A_{t_k})\}_{1\leq k\leq K}$ by solving \eqref{non_equil_X}--\eqref{non_equil_A} with $\hat{b}$
    \STATE Calculate $\mathcal{L}_1[u_{\theta_1}, c_{\theta_1}, v_{\theta_1}]$ in \eqref{loss_u}, where the expectation $\mathbb{E}_{\nu_{t}}[\cdot]$ is estimated using \eqref{Jar_reweight} with $\{(X_{t_k},A_{t_k})\}$
    \STATE Update the parameters of $u_{\theta_1}, c_{\theta_1}, v_{\theta_1}$ by gradient descent on $\mathcal{L}_1$
\ENDFOR
\STATE \textbf{Return:} the velocity field $u_{\theta_1}$ of the reduced model
\end{algorithmic}
\end{algorithm}

\begin{algorithm}[ht]
\caption{FES-FM: Sampling process}\label{algo_gen}
\begin{algorithmic}[1]
\REQUIRE trained velocity field $u_{\theta_1}$, the number of steps $K_{0}$
\STATE $\Delta t=1/K_{0}$
\STATE Sample $X_0 \sim p(x,0)$, and set $Y_0=\xi(X_0)$
\FOR{$i=0, \dots, K_{0}-1$}
\STATE $Y_{(i+1)\Delta t} = Y_{i\Delta t} + u_{\theta_1}(Y_{i\Delta t}, i\Delta t)\Delta t $
\ENDFOR
\STATE \textbf{Return:} CV samples $Y_1$
\end{algorithmic}
\end{algorithm}

\subsection{Computation of the free energy surface}

Beyond generating CV samples, FES-FM can also be used to estimate the free energy surface itself. Recall from \eqref{mean_force_identity} that the local mean force satisfies
$\mathbb{E}_{\mu_{\Sigma_{y},t}}D(x,t)=\nabla F(y,t)$.
This identity suggests learning a scalar-valued neural network $F_{\theta}(y,t)$ whose spatial gradient matches the conditional average of $D(x,t)$. For fixed $y$ and $t$, the minimizer of
\begin{equation}\label{loss_ell_fes_cal}
\mathbb{E}_{\mu_{\Sigma_{y},t}} \big\|D(x,t)-\nabla F_{\theta}(y,t)\big\|^{2}
\end{equation}
satisfies
\begin{equation}
\nabla F_{\theta}(y,t)=\mathbb{E}_{\mu_{\Sigma_{y},t}}D(x,t).
\end{equation}
Therefore, at $t=1$, $F_{\theta}(y,1)$ estimates the free energy surface associated with the target CV distribution, up to an additive constant.

As in the derivation of \eqref{loss_u}, the conditional expectation in \eqref{loss_ell_fes_cal} is difficult to compute directly. We instead average over $\mu_t$ and use the identity $\mathbb{E}_{\mu_t}\mathbb{E}_{\mu_{\Sigma_{y},t}}=\mathbb{E}_{\nu_t}$ to obtain the practical training objective
\begin{equation}\label{loss_fes_cal}
\mathcal{L}_{\mathrm{FES}}[F_{\theta}]
=\int_{0}^{1}\mathbb{E}_{\nu_t}\big\|D(x,t)-\nabla F_{\theta}(\xi(x),t)\big\|^{2}\mathrm{d}t.
\end{equation}
The expectation in \eqref{loss_fes_cal} can be estimated with the same Jarzynski reweighting identity \eqref{Jar_reweight}. Consequently, once the drift $\hat{b}$ for the non-equilibrium dynamics is fixed, $F_{\theta}$ can be trained from the same non-equilibrium trajectories used in Algorithm~\ref{algo_train}, either as an auxiliary objective during the main training process or as a post-processing step. After training, the free energy difference between two configurations $x^{(1)}$ and $x^{(2)}$ can be estimated by
\begin{equation}
F_{\theta}(\xi(x^{(1)}),1)-F_{\theta}(\xi(x^{(2)}),1).
\end{equation}
Numerical reconstructions of the free energy surface based on this objective are reported in Section~\ref{sec:fes_cal_results} and Figure~\ref{fig:FES_cal}.

\section{Hessian-informed harmonic prior distribution}\label{Sec:hessian}

In this section, we introduce a prior distribution for many-particle systems, termed the \textbf{Hessian-informed harmonic prior distribution}. Our goal is to design a prior that is invariant under global rotations and translations (i.e., $\mathrm{E}(3)$-invariant) while producing configurations that are physically meaningful and numerically stable for downstream dynamics.

Consider a system of $M$ atoms in $\mathbb{R}^3$. Its configuration is denoted by $x=(x_1^T,\dots,x_M^T)^T$, which lies in $\mathbb{R}^{n}$, where $n=3M$. 
To ensure that our model is $\mathrm{E}(3)$-invariant, we choose a prior that is invariant under global rotations and translations.
A widely used option in generative modeling is the mean-free Gaussian prior~\citep{kohler2020equivariant}, which enforces the desired symmetry. However, symmetry alone is insufficient for the sampling task considered in this paper, as it may yield configurations that are not physically meaningful. This issue is particularly important here because our method involves an interpolant between $U_0$ and $U_{\mathrm{target}}$, and thus explores intermediate distributions. To better reflect physically plausible configurations and improve numerical stability, we propose a prior that encodes curvature information through the Hessian.

Since a nontrivial normalizable translation-invariant probability distribution on $\mathbb{R}^{n}$ does not exist~\citep{pmlr-v202-yim23a,xu2026quotientspace}, we eliminate the global translation degree of freedom by restricting the prior to the center-of-mass subspace $\mathcal{P}$, defined as $\mathcal{P}=\{(x_1^T,\dots,x_M^T)^T\in \mathbb{R}^{n}\mid \sum_{i=1}^{M}x_i=0\}$. We then construct an $\mathrm{O}(3)$-invariant distribution on $\mathcal{P}$, which yields an $\mathrm{E}(3)$-invariant prior on the original configuration space. We next present the formal definition of the Hessian-informed harmonic prior distribution and discuss its properties.

Let $x_0 \in \mathbb{R}^n$ denote a local minimum of $U_{\mathrm{target}}(x)$, which satisfies $\nabla U_{\mathrm{target}}(x_0) = 0$ and $x_0\in \mathcal{P}$, and the Hessian matrix of $U_{\mathrm{target}}(x)$ at $x_0$ is denoted as $H(x_0) \in \mathbb{R}^{n \times n}$. Due to the rotational and translational invariance of $U_{\mathrm{target}}(x)$, the matrix $H(x_0)$ is singular. In Theorem~\ref{Thm_null_space_hessian}, we prove that $\operatorname{rank}(H(x_0)) = n - 6$ and discuss its null space in detail. We decompose $H(x_0)$ as $H(x_0) = P S P^T$, where $P \in \mathbb{R}^{n \times (n-6)}$ satisfies $P^T P = I_{n-6}$ and $S = \operatorname{diag}(s_1, s_2, \dots, s_{n-6})$.

The sampling procedure of the prior distribution is described as follows. First, we sample $R$ from the (Haar) uniform measure on $\mathrm{O}(3)$. Subsequently, we sample a random variable $\epsilon \sim \mathcal{N}(0, I_{n-6})$ and compute $x$ according to $x = (I_{M}\otimes R)\left(x_0 + P S^{-\frac{1}{2}} \epsilon\right)$. Here, $\otimes$ denotes the Kronecker product. 
Equivalently, conditional on $R$, the prior is Gaussian with mean $(I_M\otimes R)x_0$ and covariance $(I_M\otimes R) P S^{-1} P^T (I_M\otimes R)^T$; marginalizing over random rotations yields an $\mathrm{O}(3)$-invariant distribution.

Conditioned on a rotation, the distribution is approximately Gaussian on the $(n-6)$-dimensional subspace spanned by the nontrivial Hessian modes; in practice, we approximately evaluate the corresponding quadratic potential after aligning $x$ to $x_0$. Specifically, the potential of the Hessian-informed harmonic prior distribution is estimated as:
\begin{align}
& U_0(x) =  \frac{1}{2} ( \tilde{x}-  x_0)^T H (x_0) ( \tilde{x}- x_0 ) , \label{prior_potential}
\end{align}
where $\tilde{x}=(I_{M}\otimes R_0^{*}(x))x$, and $R_0^{*}(x) = \arg\min_{R\in \mathrm{O}(3)} \| (I_{M}\otimes R) x - x_0 \|^2$. The optimal $R_0^{*}(x)$ can be computed using the Kabsch algorithm \citep{Kabsch1976ASF}. In Theorem \ref{thm_invariant_prior_pot}, we prove that $U_0(x)$ is $\mathrm{O}(3)$-invariant.

In some applications, generating samples from both chiral states can be undesirable. Nevertheless, samples with the unwanted chirality can be readily detected and easily corrected \citep{klein2023equivariant}. Alternatively, one can also employ an $\mathrm{SE}(3)$-invariant prior distribution.

\section{Experiments}

In this section, we evaluate our method on the Müller-Brown potential and in high-dimensional scenarios. We also carry out experiments on many-particle systems to explore potential applications in molecular contexts, and employ the Hessian-informed harmonic distribution introduced above as the prior. In addition, we introduce an alanine dipeptide experiment in implicit solvent as a molecular benchmark. For the synthetic benchmark experiments, since NETS can generate samples following the Boltzmann distribution, we adopt the points produced by NETS and projected onto the CV space as the baseline (denoted as NETS-P). The alanine dipeptide experiment is instead compared against molecular-dynamics reference distributions. Figure \ref{fig:sampling_demo} illustrates the sampling procedures of FES-FM and NETS-P. In Section \ref{sec:abl}, we conduct ablation studies on $N_{\mathrm{pre}}$ and $\epsilon$.

\begin{figure}[ht]
\centering
\includegraphics[width=0.8\linewidth]{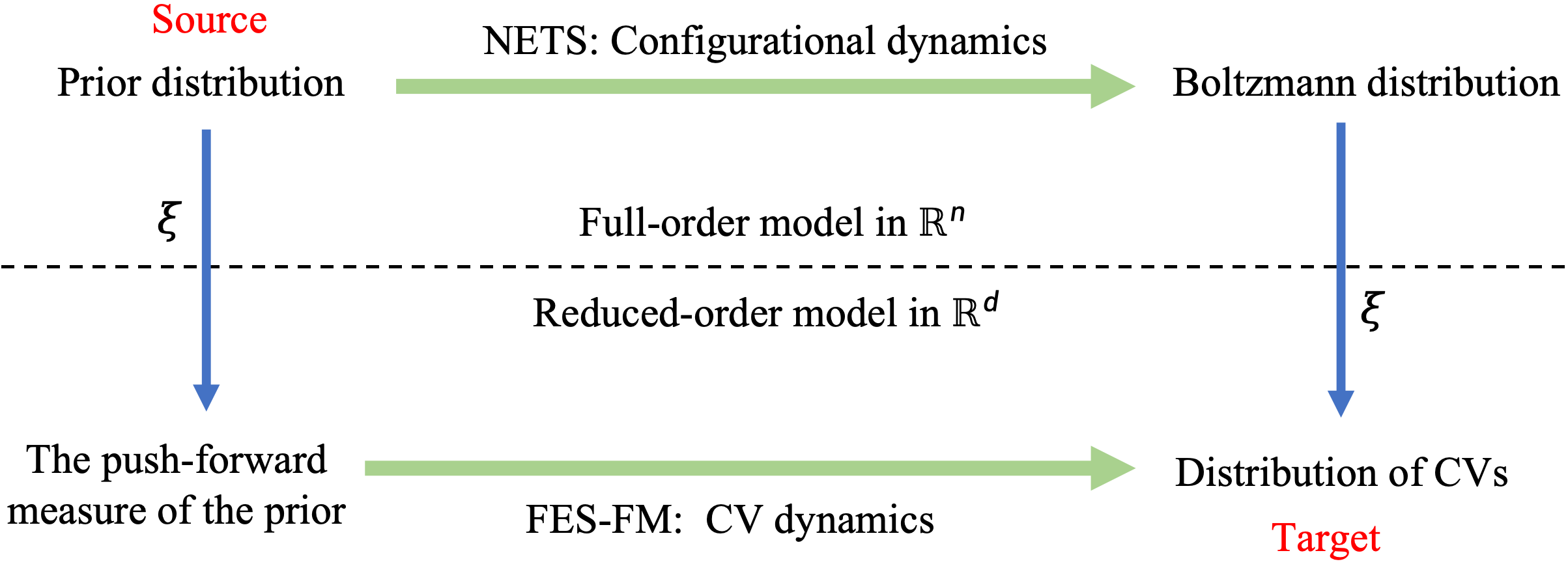}
\caption{Comparison of sampling workflows between FES-FM and NETS-P. Starting from the prior distribution in the conformational space, NETS-P first evolves the high-dimensional dynamics before projecting them onto the CV space. In contrast, FES-FM first projects the prior distribution onto the CV space and then evolves the low-dimensional dynamics.}\label{fig:sampling_demo}
\end{figure}

\subsection{Müller-Brown potential}
The Müller-Brown surface is a well-established benchmark potential energy landscape, given by
\begin{align}
&U_{\mathrm{target}}(x_1, x_2) = \sum_{i=1}^{4} A_i \exp \big(  a_i (x_1 - x_{1,i}^0)^2  +b_i (x_1 - x_{1,i}^0)(x_2 - x_{2,i}^0)+ c_i (x_2 - x_{2,i}^0)^2 \big).
\end{align}
The target distribution and the corresponding potential contours are shown in Figure~\ref{fig:MB_CV}. 
For this system, we learn a CV map using samples on the transition path. 
The CV map is represented by a neural network $y=\xi_{\theta}(x)$. 
Figure~\ref{fig:MB_CV} also displays the transition path and the contours of the learned CV.
Figure \ref{fig:MB_red} shows that the generated CV distribution agrees well with the reference CV distribution, even for a highly nonlinear CV. Numerical results are shown in Table~\ref{table:MB_res}.

\begin{table}[htbp]
\centering
\caption{Results for the Müller-Brown potential. Our method achieves comparable raw accuracy while maintaining faster sampling.}\label{table:MB_res}
\begin{tabular}{lccc}
\toprule
 Method & Time $\downarrow$ & Error $\downarrow$ & ACC/Time $\uparrow$ \\ 
\midrule
NETS-P & 4.60e-1{\scriptsize $\pm$2.35e-2} & \cellcolor{gray!15} 4.00e-3{\scriptsize $\pm$1.23e-3} & 6.09e+2{\scriptsize $\pm$2.20e+2} \\ 
FES-FM & \cellcolor{gray!15} 3.73e-1{\scriptsize $\pm$1.24e-2} & 4.11e-3{\scriptsize $\pm$1.65e-3} & \cellcolor{gray!15} 7.73e+2{\scriptsize $\pm$3.08e+2} \\
\bottomrule
\end{tabular}
\end{table}

\subsection{High-dimensional example}\label{Sec:DW}

To validate the scalability of our method, we consider a potential function defined in a high-dimensional space with dimensions $n=50, 100, 200$. This distribution exhibits a double-well structure along one specific direction, while following a Gaussian distribution along all other directions. Specifically, we define the potential function as
\begin{equation}
U_{\mathrm{target}}(x)=U_{*}(A_1^T x)+\frac{1}{2}\sum_{i=2}^{n}\|A_i^T x\|^2,
\end{equation}
where $A=(A_1, A_2,\dots,A_n)^{T} \in \mathbb{R}^{n \times n}$ is an orthogonal matrix, and $U_{*}(x) = \frac{1}{5}x^4 -\frac{6}{5}x^2 - \frac{1}{10}x$ is a double-well potential. The CV map is chosen as $\xi(x)=A_1^T x$, which reflects the characteristics of the non-Gaussian eigen-direction of the system.

As presented in Table \ref{table:DW_res}, our method significantly improves sampling speed while maintaining accuracy, as it only requires solving a low-dimensional ODE \eqref{ode_red}, whereas NETS must solve an ODE in the full $n$-dimensional space.

\begin{table}[h]
    \centering
    \caption{Results for the high-dimensional example. \textbf{Time} denotes the time (in seconds) required for sample generation (i.e., solving the ODE); \textbf{Error} represents the 1-Wasserstein distance between the generated samples and the ground truth; and \textbf{ACC/Time} stands for accuracy per unit time, which is calculated as $(\text{Time} \cdot \text{Error})^{-1}$. The results are reported as the mean and standard deviation over five independent runs. The shaded cells indicate that our method achieves the best performance.}\label{table:DW_res}
    \begin{tabular}{llcccc}
    \toprule
    & Method & Time $\downarrow$ & Error $\downarrow$ & ACC/Time $\uparrow$ \\ 
    \midrule
    \multirow{2}{*}{$n=50$}  & NETS-P & 1.89e+0{\scriptsize $\pm$9.05e-3} & 2.00e-2{\scriptsize $\pm$7.99e-3} & 3.01e+1{\scriptsize $\pm$9.77e+0} \\ 
    &  FES-FM &  \cellcolor{gray!15} 2.93e-1{\scriptsize $\pm$1.21e-2} &  \cellcolor{gray!15} 1.62e-2{\scriptsize $\pm$4.32e-3} &  \cellcolor{gray!15} 2.28e+2{\scriptsize $\pm$6.57e+1} \\ 
    \midrule
    \multirow{2}{*}{$n=100$} & NETS-P & 2.00e+0{\scriptsize $\pm$1.43e-2} & 2.37e-2{\scriptsize $\pm$9.90e-3} & 2.46e+1{\scriptsize $\pm$9.02e+0} \\ 
    &  FES-FM &  \cellcolor{gray!15} 2.90e-1{\scriptsize $\pm$2.39e-2} &  \cellcolor{gray!15} 1.91e-2{\scriptsize $\pm$3.86e-3} &  \cellcolor{gray!15} 1.90e+2{\scriptsize $\pm$4.55e+1} \\ 
    \midrule
    \multirow{2}{*}{$n=200$}  & NETS-P  & 2.28e+0{\scriptsize $\pm$6.74e-3} & 2.48e-2{\scriptsize $\pm$1.44e-2} & 2.69e+1{\scriptsize $\pm$1.81e+1} \\ 
    &   FES-FM  &  \cellcolor{gray!15} 2.80e-1{\scriptsize $\pm$1.78e-2} &  \cellcolor{gray!15} 1.60e-2{\scriptsize $\pm$6.46e-3} &  \cellcolor{gray!15} 2.73e+2{\scriptsize $\pm$1.35e+2} \\ 
    \bottomrule
    \end{tabular}
    \end{table}

\subsection{Many-particle systems}

The potential for the many-particle systems is constructed in analogy to empirical potential energy functions in molecular dynamics. We evaluate our method on two synthetic systems: a three-particle system in $\mathbb{R}^2$ (denoted $\mathbb{R}^2$-3P) and a four-particle system in $\mathbb{R}^3$ (denoted $\mathbb{R}^3$-4P). We find that using a mean-free Gaussian prior leads to numerical instability and unsatisfactory performance in our experiments. This is mainly because its samples often correspond to non-physical configurations, which in turn destabilize training and generation. Therefore, we adopt the Hessian-informed harmonic prior introduced in Section~\ref{Sec:hessian}. Although Section~\ref{Sec:hessian} focuses on the $\mathbb{R}^3$ case, the same approach easily extends to constructing the corresponding prior for $\mathbb{R}^2$.

\begin{figure}[h]
\centering
\begin{subfigure}[b]{0.49\textwidth}
\centering
\includegraphics[width=0.9\linewidth]{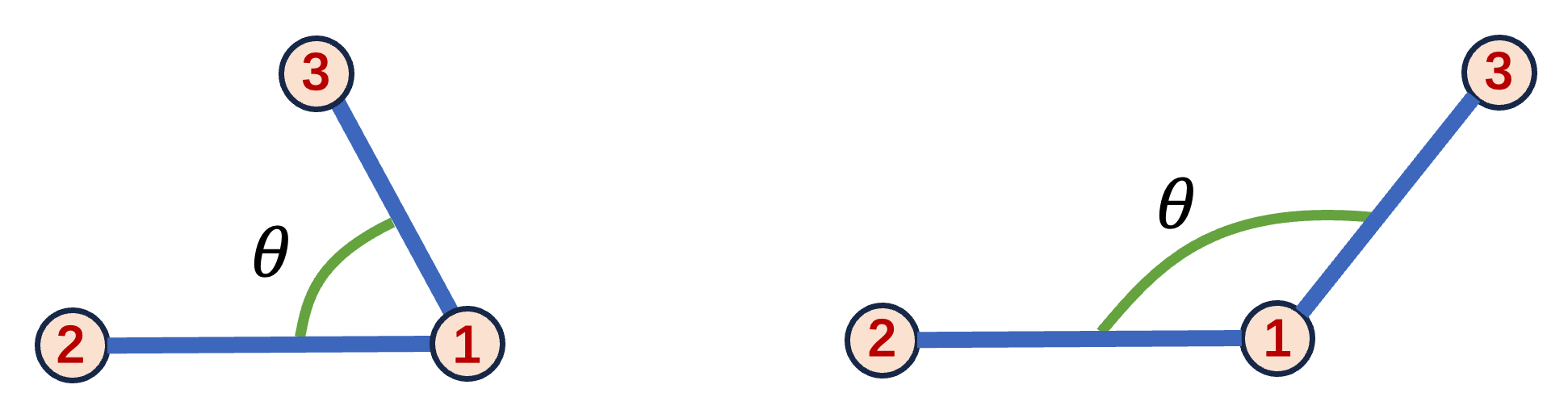}
\caption{$\mathbb{R}^2$-3P}\label{fig:R2_3P_demo}
\end{subfigure}
\begin{subfigure}[b]{0.49\textwidth}
\centering
\includegraphics[width=0.9\linewidth]{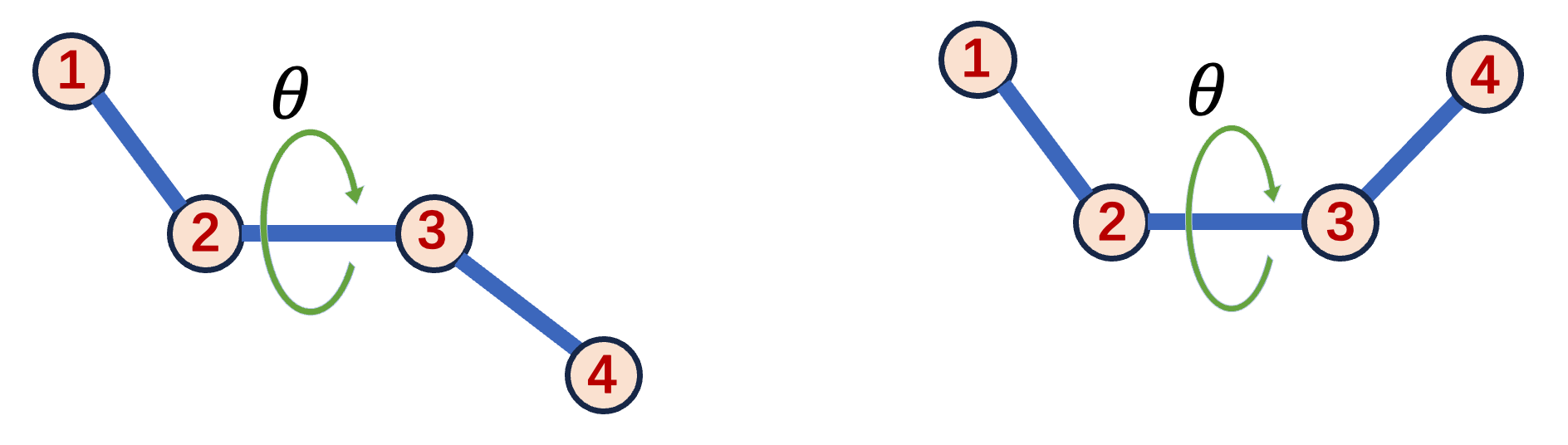}
\caption{$\mathbb{R}^3$-4P}\label{fig:R3_4P_demo}
\end{subfigure}
\caption{Illustration of multi-particle systems. (a) A three-particle system in a 2D plane, where the CV is the cosine of the angle centered at particle 1. (b) A four-particle system in 3D space, where the CV is the cosine of the dihedral angle (rotation angle) defined by the axis of particles 2 and 3.}\label{fig:many_particle_demo}
\end{figure}

\begin{figure}[htbp]
    \centering
    \begin{subfigure}[b]{0.23\textwidth}
    \centering
    \includegraphics[width=0.9\linewidth]{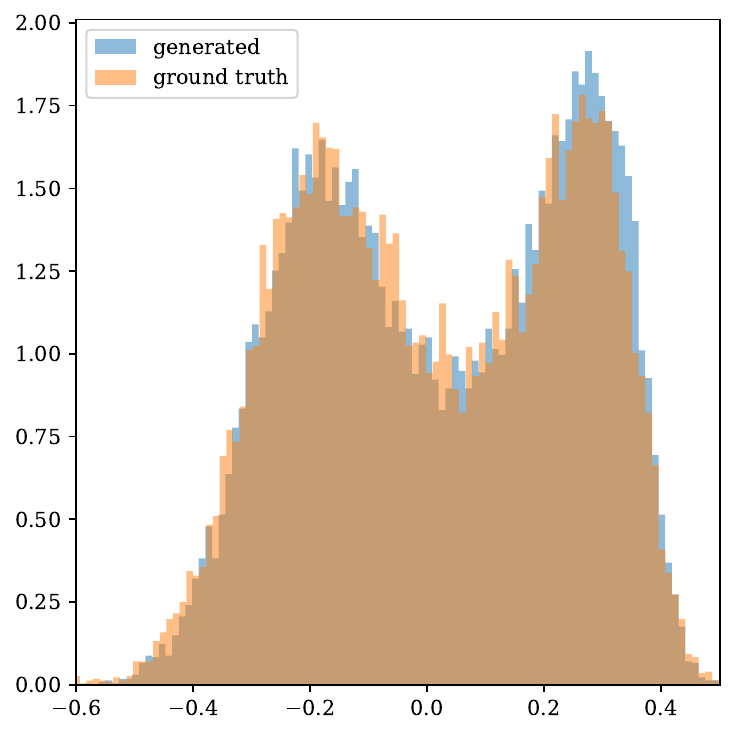}
    \caption{$\mathbb{R}^2$-3P, NETS-P}\label{fig:angle_full}
    \end{subfigure}\hfill
    \begin{subfigure}[b]{0.23\textwidth}
    \centering
    \includegraphics[width=0.9\linewidth]{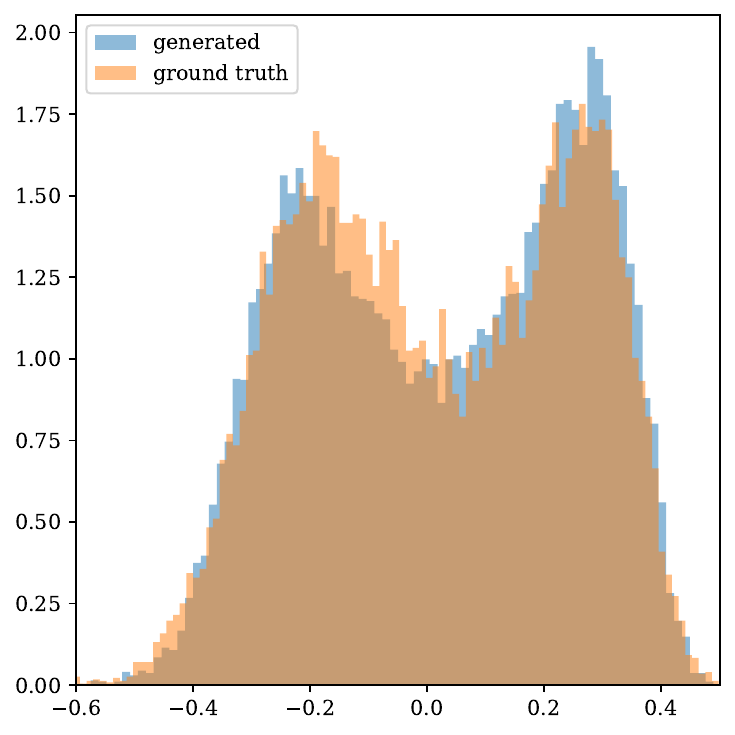}
    \caption{$\mathbb{R}^2$-3P, FES-FM}\label{fig:angle_red}
    \end{subfigure}\hfill
    \begin{subfigure}[b]{0.23\textwidth}
    \centering
    \includegraphics[width=0.9\linewidth]{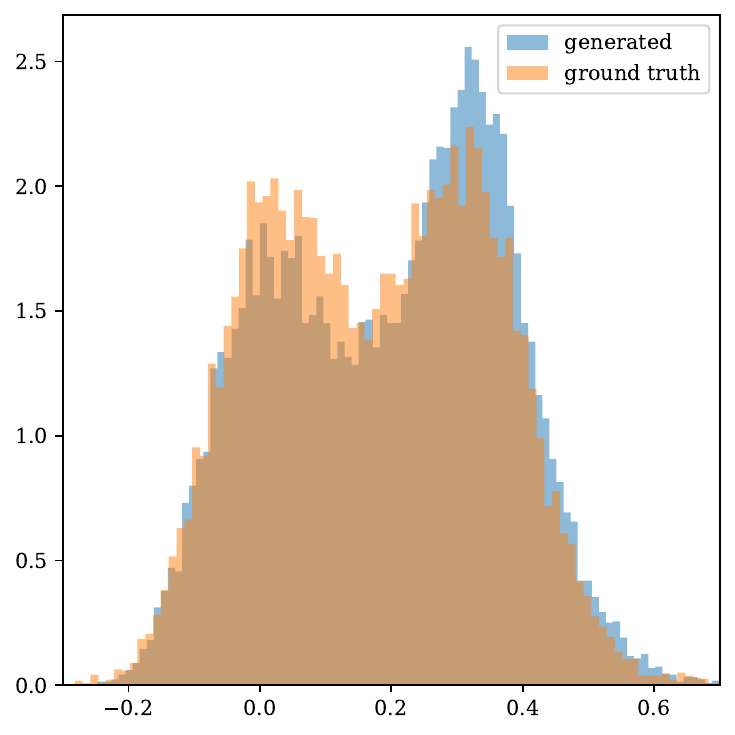}
    \caption{$\mathbb{R}^3$-4P, NETS-P}\label{fig:dihe_full}
    \end{subfigure}\hfill
    \begin{subfigure}[b]{0.23\textwidth}
    \centering
    \includegraphics[width=0.9\linewidth]{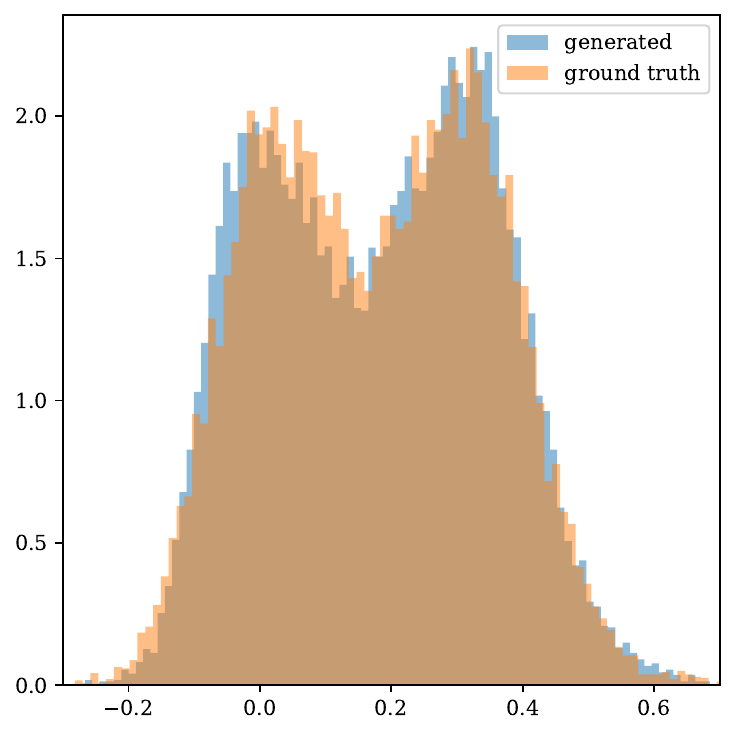}
    \caption{$\mathbb{R}^3$-4P, FES-FM}\label{fig:dihe_red}
    \end{subfigure}
    \caption{Results of the many-particle systems. The red histograms in all subfigures denote the ground-truth samples from the target distribution, generated via Langevin dynamics. (a) Blue histogram: the distribution of CV samples obtained by projecting NETS samples in $\mathbb{R}^2$-3P experiment. (b) Blue histogram: the distribution generated by our FES-FM in $\mathbb{R}^2$-3P experiment. (c) Blue histogram: the distribution of CV samples obtained by projecting NETS samples in $\mathbb{R}^3$-4P experiment. (d) Blue histogram: the distribution generated by our FES-FM in $\mathbb{R}^3$-4P experiment.}\label{fig:many_particle}
    \end{figure}

\begin{table}[ht]
\centering
\caption{Results for many-particle systems. The definitions of \textbf{Time}, \textbf{Error} and \textbf{ACC/Time} are given in Table \ref{table:DW_res}. $\mathbb{R}^2$-3P and $\mathbb{R}^3$-4P denote the three-particle system in $\mathbb{R}^2$ and the four-particle system in $\mathbb{R}^3$, respectively.}\label{table:MP_res}
\begin{tabular}{llcccc}
\toprule
& Method & Time $\downarrow$ & Error $\downarrow$ & ACC/Time $\uparrow$ \\ 
\midrule
\multirow{2}{*}{$\mathbb{R}^2$-3P} & NETS-P & 2.09e+0{\scriptsize $\pm$1.93e-2} & 2.47e-2{\scriptsize $\pm$2.96e-2} & 4.41e+1{\scriptsize $\pm$2.50e+1} \\ 
& FES-FM   &  \cellcolor{gray!15} 3.85e-1{\scriptsize $\pm$1.16e-2} &  \cellcolor{gray!15} 2.08e-2{\scriptsize $\pm$4.20e-3} &  \cellcolor{gray!15} 1.31e+2{\scriptsize $\pm$2.84e+1} \\ 
\midrule
\multirow{2}{*}{$\mathbb{R}^3$-4P} & NETS-P & 2.34e+0{\scriptsize $\pm$1.43e-2} & 1.06e-2{\scriptsize $\pm$4.88e-3} & 4.93e+1{\scriptsize $\pm$2.10e+1} \\ 
& FES-FM &  \cellcolor{gray!15} 3.91e-1{\scriptsize $\pm$1.26e-2} &  \cellcolor{gray!15} 8.98e-3{\scriptsize $\pm$4.54e-3} &  \cellcolor{gray!15} 3.57e+2{\scriptsize $\pm$1.55e+2} \\ 
\bottomrule
\end{tabular}
\end{table}

\subsubsection{Three-particle system in \texorpdfstring{$\mathbb{R}^2$}{R2}}

We consider a system of three particles with positions $x_i\in\mathbb{R}^2$ for $i=1,2,3$, and denote the configuration by $x =\operatorname{Vec}(x_1,x_2,x_3)= (x_1^T,x_2^T,x_3^T)^T$. The potential energy is given by
\begin{align}\label{R2_3P_pot}
U_{\mathrm{target}}(x) = & \alpha_{1}( \|x_1-x_2\|-r_{1})^2 + \alpha_{2}( \|x_1-x_3\|-r_{2})^2 \notag \\
& + \alpha_{3}( \|x_2-x_3\|-r_{3})^2( \|x_2-x_3\|-r_{4})^2.
\end{align}
We select the CV as the cosine of the angle at particle 1 formed by particles 2--1--3, i.e. $\xi(x) =\frac{\langle x_2-x_1, x_3-x_1\rangle}{\|x_2-x_1\|\cdot\|x_3-x_1\|}$. As shown in Figure \ref{fig:R2_3P_demo}, the system exhibits two metastable states, corresponding to different folding conformations.

Table \ref{table:MP_res} shows that our method improves sampling speed while maintaining high accuracy.
Its performance advantage may be attributed to the fact that its training essentially solves a low-dimensional PDE \eqref{PINN_red_ori} (although high-dimensional samples are employed), while NETS addresses a high-dimensional PDE \eqref{PINN_full_point}.
Figure \ref{fig:angle_red} shows that the generated samples match well with the target distribution.

\subsubsection{Four-particle system in \texorpdfstring{$\mathbb{R}^3$}{R3}}

We consider a system of four particles in $\mathbb{R}^3$ with positions $x_i\in\mathbb{R}^3$ for $i=1,2,3,4$, and denote the configuration by $x =\operatorname{Vec}(x_1,x_2,x_3,x_4)= (x_1^T,x_2^T,x_3^T,x_4^T)^T$. The potential energy is defined as
\begin{align}\label{R3_4P_pot}
U_{\mathrm{target}}(x) = & \alpha_{1}( \|x_1-x_2\|-r_{1})^2 +\alpha_{2}( \|x_2-x_3\|-r_{2})^2 + \alpha_{3}( \|x_3-x_4\|-r_{3})^2 \notag \\
&+ \alpha_{4}( \|x_1-x_3\|-r_{4})^2   + \alpha_{5}( \|x_2-x_4\|-r_{5})^2 \notag \\
&+ \alpha_{6}( \|x_1-x_4\|-r_{6})^2( \|x_1-x_4\|-r_{7})^2.
\end{align}
We choose the CV as the cosine of the dihedral angle between the planes spanned by $(x_1,x_2,x_3)$ and $(x_2,x_3,x_4)$. Define the plane normals $n_1=(x_2-x_1)\times (x_3-x_2)$ and $n_2=(x_3-x_2)\times (x_4-x_3)$. Then $\xi(x)=\frac{\langle n_1,n_2\rangle}{\|n_1\|\cdot\|n_2\|}$. As shown in Figure \ref{fig:R3_4P_demo}, the system exhibits two metastable states, corresponding to different folding conformations.

Table \ref{table:MP_res} shows that FES-FM samples faster than the baseline while maintaining, and in this case improving, accuracy.
The overlap between the blue and red histograms in Figure \ref{fig:dihe_red} demonstrates good agreement between the generated CV distribution and the target distribution.

\subsection{Alanine dipeptide in implicit solvent}\label{Sec:alanine}

We further consider alanine dipeptide in implicit solvent as a molecular benchmark for FES-FM. The collective variable is chosen as the backbone dihedral angle $\psi$, which is defined by atoms whose 1-based indices are 7, 9, 15, and 17, and the goal is to learn a reduced transport in this one-dimensional CV space. In this experiment, the prior distribution is taken to be the $\psi$-marginal distribution induced by the ensemble at $800\,\mathrm{K}$, while the target distribution is the corresponding $\psi$-marginal distribution at $300\,\mathrm{K}$. Molecular dynamics simulation at $800\,\mathrm{K}$ is comparatively easy to perform because the system crosses conformational barriers more frequently, whereas at $300\,\mathrm{K}$ the dynamics exhibits pronounced rare-event behavior. This setting tests whether the learned reduced dynamics can transform a high-temperature CV distribution, which explores conformational states more broadly, into the lower-temperature target distribution.

Because the prior and target distributions are induced by the same molecular potential at different temperatures, their corresponding dimensionless potentials differ only through the inverse-temperature factor. Therefore, the path potential used by FES-FM, which linearly interpolates between the prior and target potentials, can also be interpreted as the same molecular potential evaluated at an effective intermediate temperature. More precisely, if $U_T(x)=\beta_T V(x)$ denotes the temperature-scaled molecular potential, then
\begin{equation}
U(x,t)=(1-t)U_{800\,\mathrm{K}}(x)+tU_{300\,\mathrm{K}}(x)
=\beta(t)V(x),
\quad
\beta(t)=(1-t)\beta_{800\,\mathrm{K}}+t\beta_{300\,\mathrm{K}}.
\end{equation}
Thus, the intermediate distributions along the ODE dynamics correspond to CV distributions associated with effective temperatures between $800\,\mathrm{K}$ and $300\,\mathrm{K}$.

Unlike the experiments above, this alanine dipeptide experiment does not use the warm-up process in Algorithm~\ref{algo_warmup}. Equivalently, we set $N_{\mathrm{pre}}=0$ and omit the pretraining of the full-space drift $b_{\theta_0}$. In the non-equilibrium dynamics \eqref{non_equil_X}--\eqref{non_equil_A}, the fixed drift is taken to be the zero field, i.e., $\hat{b}\equiv 0$. To reduce the variance of simulating this coupled system, we use the resampling strategy discussed in Section~\ref{sec:expectation_estimation}, following \citep[Algorithm~1]{tan2025scalable}. The reduced model is then trained directly for the temperature-transfer task from the $800\,\mathrm{K}$ prior distribution to the $300\,\mathrm{K}$ target distribution. As shown in Figure~\ref{fig:alanine_ode_cv}, the generated CV distributions agree well with the molecular-dynamics references along the full interpolation path. The learned free energy surface corresponding to the $300\,\mathrm{K}$ target distribution is shown in Figure~\ref{fig:alanine_fes_300K}.

\begin{figure}[ht]
\centering
\includegraphics[width=0.95\linewidth]{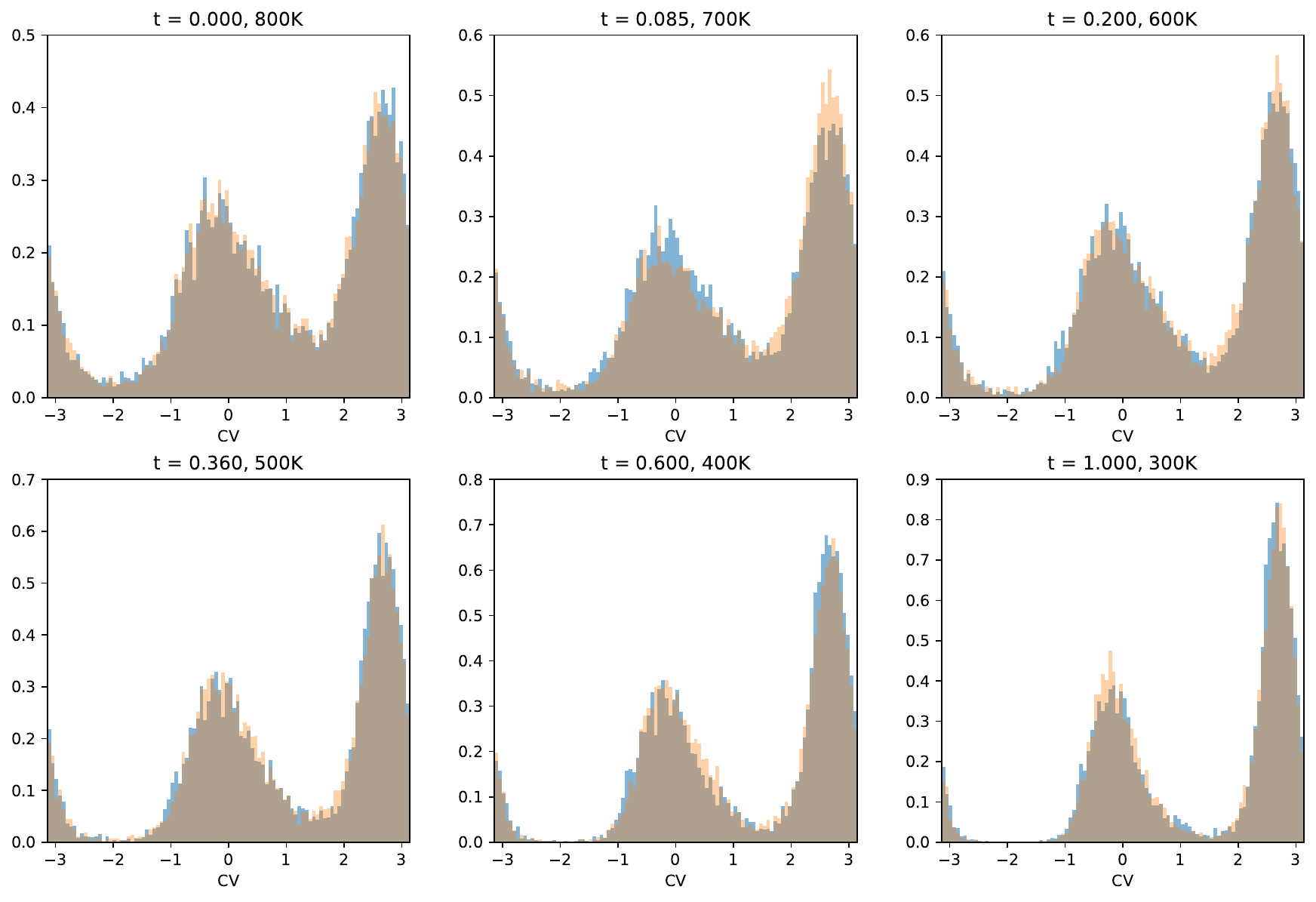}
\caption{CV distributions along the ODE path for alanine dipeptide in implicit solvent. Red curves denote reference distributions obtained from molecular dynamics simulations, and blue curves denote distributions generated by FES-FM. The close agreement at all shown times indicates that FES-FM accurately tracks the path distributions, including the intermediate distributions corresponding to different effective temperatures.}\label{fig:alanine_ode_cv}
\end{figure}

\subsection{Free energy surface computation}\label{sec:fes_cal_results}

We also evaluate the ability of FES-FM to compute the free energy surface using the gradient-matching objective \eqref{loss_fes_cal}. Figure~\ref{fig:FES_cal} reports the reconstructed free energy surfaces for five representative benchmarks: the Müller-Brown potential, the DW-50D example, the three-particle system in $\mathbb{R}^2$, the four-particle system in $\mathbb{R}^3$, and alanine dipeptide in implicit solvent at $300\,\mathrm{K}$. The reference curves are obtained by projecting samples from long-time Langevin dynamics or molecular dynamics onto the corresponding CV space and then applying density estimation. Across these examples, the free energy surfaces learned from \eqref{loss_fes_cal} closely match the reference solutions, indicating that the same reduced framework can be used not only for direct CV sampling but also for recovering the underlying free energy profile. For alanine dipeptide, the learned surface in Figure~\ref{fig:alanine_fes_300K} is less accurate in the high-energy region near $\psi=-2$, where essentially no samples are available. Improving FES reconstruction in such poorly sampled regions is currently under investigation, and the results will be reported elsewhere.

\section{Conclusion}
We propose FES-FM, a reduced flow-matching approach that learns transport directly in CV space, avoiding full-space simulation at generation time. We also introduce an $\mathrm{E}(3)$-invariant Hessian-informed harmonic prior for many-particle systems that produces physically meaningful configurations. Across several benchmark potentials, FES-FM significantly improves sampling speed while maintaining accuracy compared with full-space baselines.

\section*{Acknowledgements}
Tiejun Li acknowledges the support from National Key R\&D Program of China under grant 2021YFA1003301, and National Science Foundation of China under grant 12288101. This work is supported by High-performance Computing Platform of Peking University.

\begin{figure}[h]
\centering
\begin{subfigure}[b]{0.30\textwidth}
\centering
\includegraphics[width=0.95\linewidth]{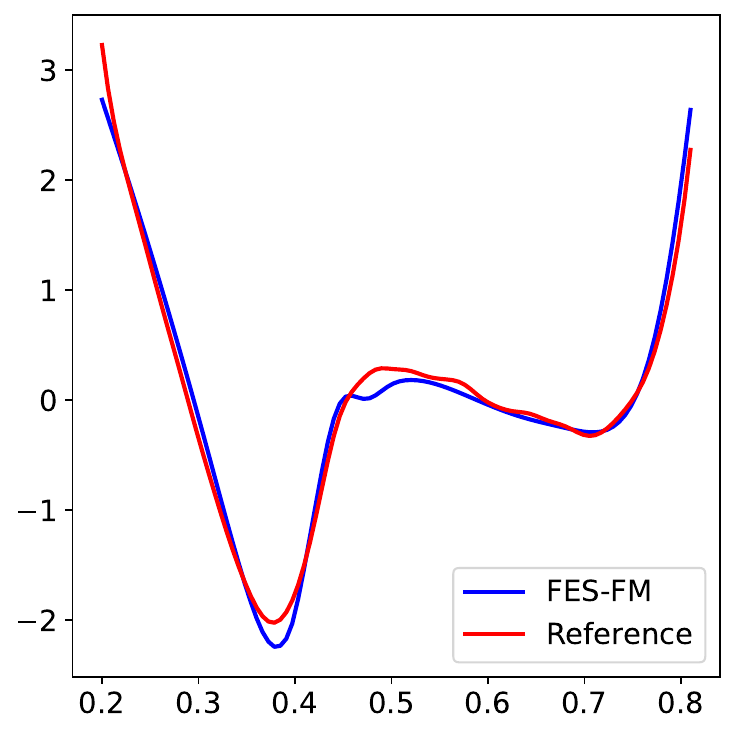}
\caption{Müller-Brown}\label{fig:MB_fes}
\end{subfigure}
\hspace{0.06\textwidth}
\begin{subfigure}[b]{0.30\textwidth}
\centering
\includegraphics[width=0.95\linewidth]{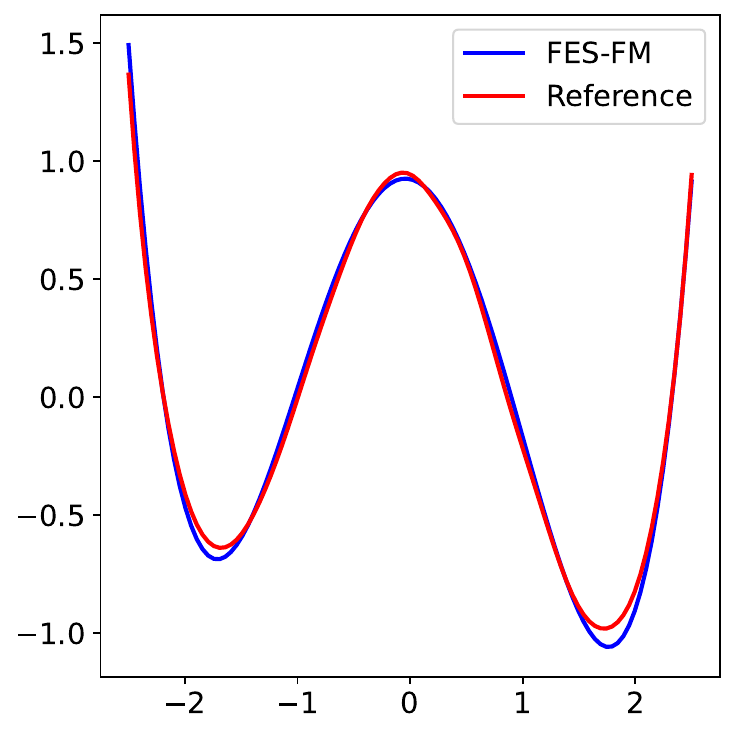}
\caption{DW-50D}\label{fig:DW50_fes}
\end{subfigure}\\[0.8em]
\begin{subfigure}[b]{0.30\textwidth}
\centering
\includegraphics[width=0.95\linewidth]{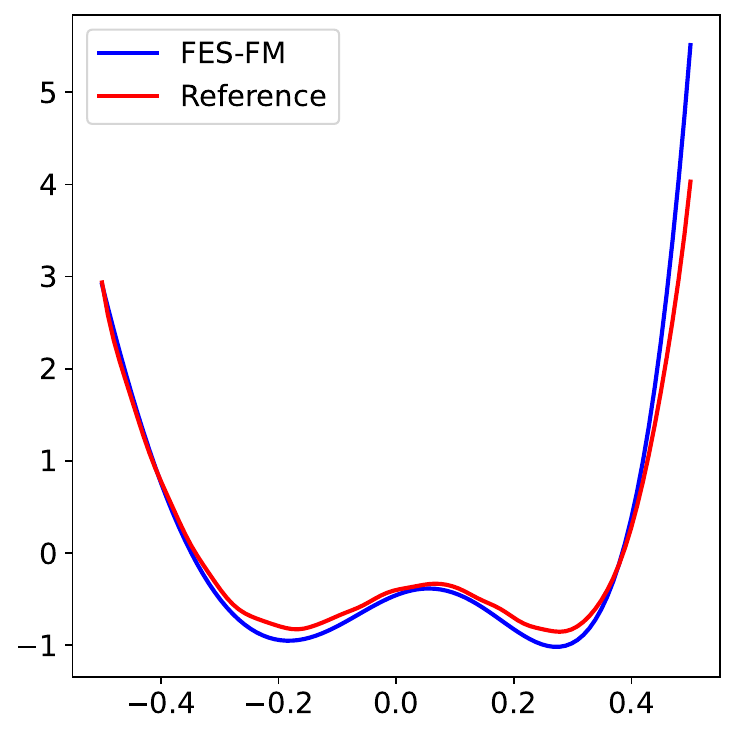}
\caption{$\mathbb{R}^2$-3P}\label{fig:angle_fes}
\end{subfigure}
\hspace{0.025\textwidth}
\begin{subfigure}[b]{0.30\textwidth}
\centering
\includegraphics[width=0.95\linewidth]{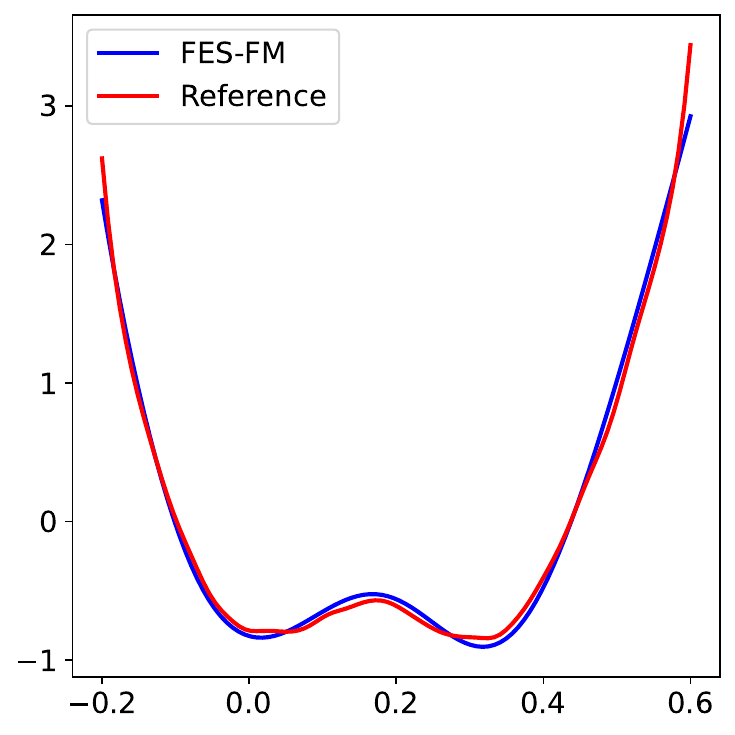}
\caption{$\mathbb{R}^3$-4P}\label{fig:dihe_fes}
\end{subfigure}
\hspace{0.025\textwidth}
\begin{subfigure}[b]{0.30\textwidth}
\centering
\includegraphics[width=0.95\linewidth]{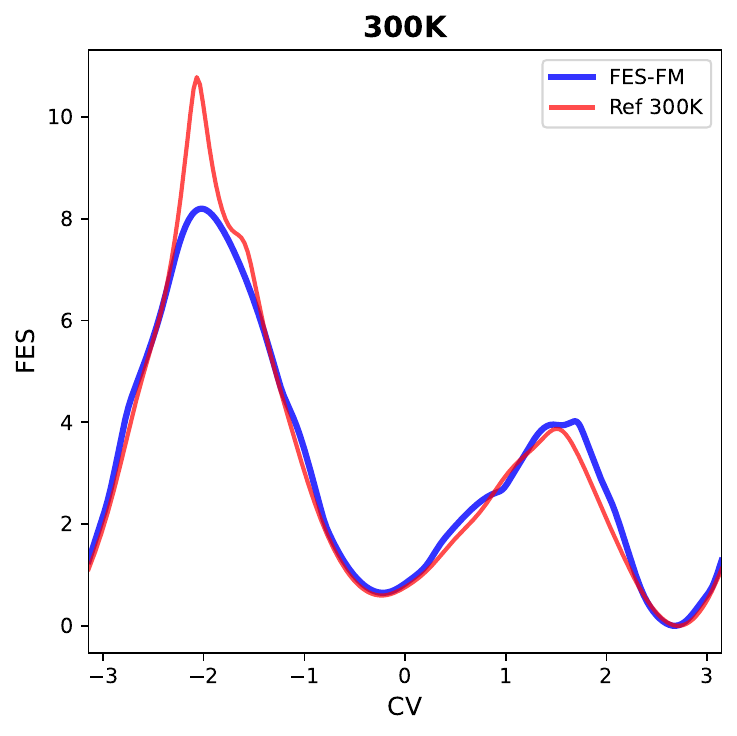}
\caption{Alanine dipeptide}\label{fig:alanine_fes_300K}
\end{subfigure}
\caption{Free energy surface computation results. Red curves denote reference solutions obtained by projecting samples from long-time Langevin dynamics or molecular dynamics onto the CV space and applying density estimation. Blue curves denote the free energy surfaces trained with the loss \eqref{loss_fes_cal}. Panel (e) shows the learned free energy surface for alanine dipeptide in implicit solvent at $300\,\mathrm{K}$.}\label{fig:FES_cal}
\end{figure}
\clearpage

\appendix

\section{A Comparison List between NETS and FES-FM}\label{compare_summary}

To better illustrate the connection and difference between NETS (full model) and FES-FM (reduced model), we provide a detailed comparison of the related concepts and formulas in Table~\ref{model_summary_combined}.

\begin{table}[ht]
\centering
\caption{Model summary and comparison between the full NETS model and the reduced FES-FM model}\label{model_summary_combined}
\begin{tabular}{lc}
\toprule
\multicolumn{2}{c}{\textbf{NETS (full model) in $\mathbb{R}^n$}} \\ 
\midrule
Target density & $p(x) = \frac{1}{Z} e^{-U_{\mathrm{target}}(x)}$ \\
Target potential & $U_{\mathrm{target}}(x)$ \\
\midrule
Marginal density & $p(x,t) = \frac{ e^{-U(x,t)}}{Z(t)}$, $Z(t) = \int_{\mathbb{R}^{n}} e^{-U(x,t)} \mathrm{d}x$ \\
Marginal potential & $U(x,t) = (1 - t) U_{0}(x) + t U_{\mathrm{target}}(x)$ \\
Marginal measure & $\mathrm{d} \nu_{t}(x) =p(x, t) \mathrm{d}x$ \\
\midrule
Transport map & $\frac{\mathrm{d}}{\mathrm{d}t} X_t= b(X_t, t)$ \\
Transport equation & $\partial_t U(x,t)  + b(x,t)\cdot \nabla U(x,t) - \nabla \cdot b(x,t) + \partial_t \log Z(t)=0$ \\
\bottomrule
\\
\multicolumn{2}{c}{\textbf{Model Reduction $\xi:\mathbb{R}^n \to \mathbb{R}^d$}}\\
\\
\toprule
\multicolumn{2}{c}{\textbf{FES-FM (reduced model) in $\mathbb{R}^d$}} \\ 
\midrule
Target density & $\rho(y)= \int_{\Sigma_{y}} p(x) |\nabla \xi(x)^{T} \nabla \xi(x)|^{-\frac{1}{2}} \mathrm{d} \sigma_{\Sigma_{y}}(x).$ \\
Target potential & $F(y) = -\log \rho(y)$ \\
\midrule
Marginal density & $\rho(y,t) =\int_{\Sigma_{y}} p(x,t) |\nabla \xi(x)^{T} \nabla \xi(x)|^{-\frac{1}{2}} \mathrm{d} \sigma_{\Sigma_{y}}(x)$ \\
Marginal potential & $F(y,t) = -\log \rho(y,t)$  \\
Marginal measure & $\mathrm{d} \mu_{t}(y)  =\rho(y,t) \mathrm{d}y$ \\
\midrule
Transport map & $\frac{\mathrm{d}}{\mathrm{d}t} Y_t= u(Y_t, t)$  \\
Transport equation & $\mathbb{E}_{\mu_{\Sigma_{y},t}} \partial_{t} U(x,t) +u(y,t) \cdot \mathbb{E}_{\mu_{\Sigma_{y},t}} D(x,t)  - \nabla \cdot u(y,t) + \partial_{t} \log Z(t)=0$ \\
\bottomrule
\end{tabular}
\end{table}

\section{Details of the Hessian-informed harmonic prior distribution}\label{Hessian_Prior}

Define $e_i\in\mathbb{R}^3$ as the $i$-th standard basis vector. Let $E_{i,j}\in\mathbb{R}^{3\times3}$ have a 1 at entry $(i,j)$ and zeros elsewhere, and set $J_{i,j}=E_{i,j}-E_{j,i}$. Let ${\bf 1}_M=(1,\dots,1)^T\in\mathbb{R}^M$ and define $b_i={\bf 1}_M\otimes e_i\in\mathbb{R}^{3M}$. We say a function $U$ is $\mathrm{O}(3)$-invariant if $U(x) = U((I_M\otimes R)x)$ for any $R\in \mathrm{O}(3)$. We say $U$ is translation-invariant if $U(x) = U(x+b_i)$ for all $1\leq i\leq 3$. The degrees of freedom corresponding to translation are 3, and those corresponding to rotation are also 3. The following theorem characterizes the null space of $H(x_0)$.

\begin{theorem}\label{Thm_null_space_hessian}
Assume $U$ is twice continuously differentiable, $\mathrm{O}(3)$-invariant, and translation-invariant. Let $H(x_0) = \nabla^2 U(x_0)$ denote the Hessian at $x_0\in\mathbb{R}^{3M}$. Then, the null space of $H(x_0)$ is spanned by the vectors $(I_M \otimes J_{i,j})x_0$ ($1\leq i<j \leq 3$) together with $b_i$ ($1\leq i \leq 3$) and consequently $\operatorname{rank}(H(x_0)) = n - 6$.
\end{theorem}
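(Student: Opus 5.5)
The plan is to show three things in turn: the six listed vectors lie in $\ker H(x_0)$, they are linearly independent, and they span the whole kernel. The first two follow from differentiating the invariance identities and from elementary geometry of the configuration $x_0$. The third needs a nondegeneracy hypothesis that the statement leaves implicit.

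\textbf{Step 1: the translation vectors.} Translation invariance gives $U(x+sb_i)=U(x)$ for all $s\in\mathbb{R}$ and all $x$. Differentiating in $s$ at $s=0$ gives $\nabla U(x)\cdot b_i=0$ for every $x$. Differentiating this identity in $x$ gives $H(x)b_i=0$ for every $x$, and in particular at $x_0$.

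\textbf{Step 2: the rotation vectors.} For $1\le i<j\le 3$ the matrix $J_{i,j}$ is skew-symmetric, so $e^{sJ_{i,j}}\in \mathrm{SO}(3)\subset\mathrm{O}(3)$. Hence $U\big((I_M\otimes e^{sJ_{i,j}})x\big)=U(x)$ for all $s$. Differentiating at $s=0$ gives
\begin{equation*}
\nabla U(x)\cdot (I_M\otimes J_{i,j})x=0 \quad\text{for all } x .
\end{equation*}
Differentiating once more in $x$ yields
\begin{equation*}
H(x)(I_M\otimes J_{i,j})x+(I_M\otimes J_{i,j})^T\nabla U(x)=0 .
\end{equation*}
At $x_0$ I will use $\nabla U(x_0)=0$, which holds in the paper's setting because $x_0$ is a local minimum. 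The second term then vanishes, and $H(x_0)(I_M\otimes J_{i,j})x_0=0$. I will state this critical-point hypothesis explicitly, since without it the rotational vectors need not lie in the kernel.

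\textbf{Step 3: linear independence.} Since $x_0\in\mathcal P$, each rotational vector has Euclidean inner product with $b_k$ equal to $e_k^T J_{i,j}\sum_m x_{0,m}=0$. So the translational and rotational families are mutually orthogonal. The $b_k$ are clearly independent. A combination $\sum a_{ij}(I_M\otimes J_{i,j})x_0$ equals $(\omega\times x_{0,m})_m$ for the vector $\omega\in\mathbb{R}^3$ determined by the $a_{ij}$. This vanishes only if every atom position is parallel to $\omega$. Therefore the three rotational vectors are independent exactly when $x_0$ is not collinear, which I will assume; molecular minima with $M\ge3$ generically satisfy this. The span is then $6$-dimensional, giving $\operatorname{rank}H(x_0)\le n-6$.

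\textbf{Step 4: the reverse inclusion.} This is the main obstacle, because invariance alone cannot give it. A constant $U$ satisfies every hypothesis and has $H\equiv0$. So I will read the theorem under the standard (Morse–Bott type) nondegeneracy assumption: $H(x_0)$ is nonsingular, hence positive definite at a minimum, on the orthogonal complement of the tangent space to the $\mathrm{E}(3)$-orbit through $x_0$. That tangent space is exactly the span found in Steps 1–3.

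Under this assumption, decompose $\mathbb{R}^n=T\oplus T^\perp$, where $T$ is that span. Since $H(x_0)$ is symmetric and $H(x_0)T=0$, it maps $T^\perp$ into $T^\perp$. Nondegeneracy on $T^\perp$ then gives $\ker H(x_0)=T$, and hence $\operatorname{rank}H(x_0)=n-6$.

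I would present the result as: the inclusion and independence hold under the stated assumptions plus $\nabla U(x_0)=0$ and a non-collinear $x_0$, and equality of the kernel with $T$ holds under the transversal nondegeneracy assumption. That assumption is precisely what the decomposition $H(x_0)=PSP^T$ with $s_i>0$ in Section~\ref{Sec:hessian} presupposes.
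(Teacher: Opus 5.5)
Your proposal is correct and follows the same route as the paper: differentiate the translation- and rotation-invariance identities, then use $\nabla U(x_0)=0$ to place the six vectors in $\ker H(x_0)$, and finally appeal to nondegeneracy to get $\operatorname{rank}H(x_0)=n-6$. It is more careful than the paper on three points: your first-order identity $\nabla U(x)\cdot b_i=0$ yields $H(x)b_i=0$ directly, whereas the paper passes from $b_i^TH(x_0)b_i=0$ to $H(x_0)b_i=0$ and that step tacitly needs positive semidefiniteness; you actually prove the independence using $x_0\in\mathcal{P}$ and non-collinearity; and you correctly note that the reverse inclusion $\ker H(x_0)\subseteq T$ does not follow from invariance (a constant $U$ is a counterexample) but needs the transversal nondegeneracy that the paper only invokes as ``non-degenerate conditions.''
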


\begin{proof}[Proof of Theorem \ref{Thm_null_space_hessian}]

By translation invariance, $U(x+tb_i)=U(x)$ for all $t\in\mathbb{R}$ and $1\leq i\leq3$. Differentiating twice with respect to $t$ and evaluating at $t=0$ (via chain rule):
\begin{equation}
0 = \frac{\mathrm{d}^2}{\mathrm{d}t^2}U(x+tb_i)\bigg|_{t=0} = b_i^T H(x) b_i.
\end{equation}
At the local minimum $x_0$ ($\nabla U(x_0)=0$), this simplifies to $H(x_0)b_i=0$ for $1\leq i\leq3$, so $b_i\in\ker(H(x_0))$.

By rotation invariance, we have $\nabla U((I_M\otimes R)x_0)=(I_M\otimes R)\nabla U(x_0)=0$, for $R\in\mathrm{O}(3)$. For the skew-symmetric $J_{i,j}$, $e^{tJ_{i,j}}\in\mathrm{O}(3)$, so differentiating $\nabla U((I_M\otimes e^{tJ_{i,j}})x_0)=0$ with respect to $t$ at $t=0$ gives:
\begin{equation}
H(x_0)(I_M\otimes J_{i,j})x_0=0, \quad \forall 1\leq i<j\leq3,
\end{equation}
meaning $(I_M\otimes J_{i,j})x_0\in\ker(H(x_0))$.

Under non-degenerate conditions, the 3 translation vectors $\{b_i\}$ and 3 rotation vectors $\{(I_M\otimes J_{i,j})x_0\}$ are linearly independent; thus, $H(x_0)$ has 6 trivial degrees of freedom.
\end{proof}

\begin{theorem}\label{thm_invariant_prior_pot}
The potential $U_0$ defined in \eqref{prior_potential} is $\mathrm{O}(3)$-invariant.
\end{theorem}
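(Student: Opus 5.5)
The plan is to show directly that for any $Q\in\mathrm{O}(3)$, setting $x'=(I_M\otimes Q)x$ yields the same aligned configuration, $\tilde{x}'=\tilde{x}$. Since $U_0$ depends on $x$ only through $\tilde{x}$ via \eqref{prior_potential}, this gives $U_0(x')=U_0(x)$.

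The key step is a reparametrization of the Kabsch minimization. The map $R\mapsto RQ$ is a bijection of $\mathrm{O}(3)$ onto itself, and $(I_M\otimes R)(I_M\otimes Q)=I_M\otimes (RQ)$. Hence
\begin{equation*}
\min_{R\in\mathrm{O}(3)}\|(I_M\otimes R)x'-x_0\|^2=\min_{R'\in\mathrm{O}(3)}\|(I_M\otimes R')x-x_0\|^2 .
\end{equation*}
The minimizers correspond exactly via $R'=RQ$, so $R_0^{*}(x')\,Q=R_0^{*}(x)$. It follows that
\begin{equation*}
\tilde{x}'=(I_M\otimes R_0^{*}(x'))(I_M\otimes Q)x=(I_M\otimes R_0^{*}(x')Q)x=(I_M\otimes R_0^{*}(x))x=\tilde{x}.
\end{equation*}
Substituting into \eqref{prior_potential} then gives the invariance.

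The main obstacle is that the $\arg\min$ need not be unique. When the cross-covariance matrix between $x$ and $x_0$ is rank-deficient, the Kabsch minimizer over $\mathrm{O}(3)$ is not a single rotation, and $R_0^{*}$ is then a set-valued map. I would handle this with two remarks. First, the argument above shows that the \emph{set} of minimizers for $x'$ equals the set of minimizers for $x$ right-multiplied by $Q^{-1}$. So the set of aligned configurations $\{(I_M\otimes R)x : R \text{ a minimizer}\}$ is identical for $x$ and $x'$. Consequently any selection rule that depends only on this set, or generic uniqueness, which holds whenever the $3\times 3$ cross-covariance $\sum_i x_{0,i}x_i^T$ has full rank, makes $U_0$ well defined and invariant. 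Second, I would note explicitly that the statement implicitly assumes $R_0^{*}$ is well defined, for example almost everywhere. Outside this technical point the proof is a one-line group-translation argument, and it uses neither the structure of $H(x_0)$ nor Theorem~\ref{Thm_null_space_hessian}.
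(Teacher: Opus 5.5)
Your proposal is correct and follows the paper's proof. The paper likewise uses $R_0^{*}((I_M\otimes Q)x)=R_0^{*}(x)Q^T$ to conclude that the aligned configuration $\tilde{x}$ is unchanged, and hence so is $U_0$. Your reparametrization $R\mapsto RQ$ supplies the justification for that identity, which the paper only asserts, and your remark on non-unique Kabsch minimizers addresses a well-definedness point the paper leaves implicit.
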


\begin{proof}[Proof of Theorem \ref{thm_invariant_prior_pot}]
For any $Q\in\mathrm{O}(3)$, $R_0^{*}((I_{M}\otimes Q)x)=R_0^{*}(x)Q^T$. Therefore, 
\begin{equation}
(I_{M}\otimes R_0^{*}((I_{M}\otimes Q)x)) (I_{M}\otimes Q)x=(I_{M}\otimes R_0^{*}(x))(I_{M}\otimes Q^T)(I_{M}\otimes Q)x=(I_{M}\otimes R_0^{*}(x))x.
\end{equation}
Then, we have
\begin{align}
& U_0((I_{M}\otimes Q)x) \notag \\
= &\frac{1}{2} \bigg( (I_{M}\otimes R_0^{*}((I_{M}\otimes Q)x))(I_{M}\otimes Q)x -  x_0 \bigg)^T H (x_0)  \notag \\
&  \hspace{2cm} \cdot \bigg( (I_{M}\otimes R_0^{*}((I_{M}\otimes Q)x)) (I_{M}\otimes Q)x- x_0 \bigg) \notag \\
=&\frac{1}{2} \big( (I_{M}\otimes R_0^{*}(x))x -  x_0 \big)^T H (x_0) \big( (I_{M}\otimes R_0^{*}(x)) x- x_0 \big) \notag \\
=& U_0(x).
\end{align}
This completes the proof.
\end{proof}

\section{Experimental details}

All experiments are conducted on a single NVIDIA A100-PCIe-40GB GPU.
We use $K=100$ time steps to sample the non-equilibrium state in all experiments except alanine dipeptide, for which $K=200$ time steps are used.
The time-dependent diffusion coefficient $\epsilon_t$ in \eqref{non_equil_X} is set to a constant, i.e. $\epsilon_t\equiv\epsilon$.
For the loss function in \eqref{loss_u_ori}, we fix the parameters as $\lambda = 1.0$ for all experiments.
We use multilayer perceptrons (MLPs) with SiLU activations. Both $c_{\theta_0}$ and $c_{\theta_1}$ have three hidden layers with 64 units per layer. Models are trained using PyTorch with the Adam optimizer. During training, the learning rate is linearly decayed from 0.001 to 0.0005, and gradients are clipped when their $\ell_2$-norm exceeds a predefined threshold of 10. The batch size is chosen as 2048.
The ODEs \eqref{ode_full} and \eqref{ode_red} are solved via the Euler method with a step size of 0.001.
The 1-Wasserstein distance is computed using 10,000 points sampled from the generated distribution and 10,000 points sampled from the reference distribution.
Remaining hyperparameters are summarized in Table \ref{Hyperparameters}. Detailed experimental setups are provided in the subsections below.

\begin{table}[ht]
\centering
\caption{Parameters in our experiments. $\epsilon$ denotes the time-dependent diffusion coefficient in \eqref{non_equil_X}. $N_{\mathrm{pre}}$ is the number of warm-up epochs in Algorithm~\ref{algo_warmup}, and $N_{\mathrm{epoch}}$ is the number of training epochs in Algorithm~\ref{algo_train}. $N_{\mathrm{n}}^{b}$ and $N_{\mathrm{l}}^{b}$ are the numbers of hidden nodes per layer and hidden layers in the neural network $b_{\theta_0}$, respectively. $N_{\mathrm{n}}^r$ and $N_{\mathrm{l}}^r$ are the numbers of hidden nodes per layer and hidden layers in the neural networks $u_{\theta_1}$ and $v_{\theta_1}$, respectively. DW-50D, DW-100D and DW-200D denote experiments on the double-well potential considered in Section \ref{Sec:DW}.}\label{Hyperparameters}
\begin{tabular}{lcccccccc}
\toprule
Datasets & $\epsilon$ & $N_{\mathrm{pre}}$  & $N_{\mathrm{epoch}}$  & $N_{\mathrm{n}}^{b}$ & $N_{\mathrm{l}}^{b}$ & $N_{\mathrm{n}}^r$& $N_{\mathrm{l}}^r$ \\
\midrule
Müller-Brown & 0.2 & 2000 & 20000  & 128 & 3 & 64 & 3 \\
\midrule
DW-50D & 0.1& 2000 & 5000 & 512 & 3 & 64 & 2 \\
DW-100D & 0.1& 2000 & 5000 & 512 & 3 & 64 & 2 \\
DW-200D & 0.1& 2000 & 5000 & 512 & 3 & 64 & 2 \\
\midrule
$\mathbb{R}^2$-3P & 0.02& 2000 & 5000  & 256 & 3  & 64 & 3 \\
$\mathbb{R}^3$-4P & 0.02& 2000 & 5000 & 256 & 3 & 64 & 3  \\
\midrule
Dipeptide & 0.0001 & 0 & 1000  & 256 & 3  & 64 & 3 \\
\bottomrule
\end{tabular}
\end{table}

\subsection{Computational cost}\label{sec:train_cost}

Table~\ref{tab:FES-FM-traincost} reports the per-epoch computational cost of the main training components. We define $T^{\mathrm{train}}_{\mathrm{sim}}$ as the wall-clock time for generating one batch of training data by simulating the non-equilibrium dynamics in \eqref{non_equil_X}--\eqref{non_equil_A}. The quantities $T^{\mathrm{train}}_{\mathrm{NETS}}$ and $T^{\mathrm{train}}_{\mathrm{FES-FM}}$ denote the per-epoch optimization time of the NETS full-space training step and the FES-FM reduced training step, respectively. For relatively simple examples, such as the Müller-Brown and DW-50D potentials, these two optimization costs are comparable. For more complex examples, such as $\mathbb{R}^2$-3P and $\mathbb{R}^3$-4P, $T^{\mathrm{train}}_{\mathrm{NETS}}$ is substantially larger than $T^{\mathrm{train}}_{\mathrm{FES-FM}}$, showing that the reduced FES-FM update is cheaper than the corresponding full-space NETS update. The warm-up process therefore incurs an additional training-stage cost only when it is used; at inference time, FES-FM samples by solving the reduced ODE \eqref{ode_red}, so no warm-up trajectories are generated during sampling.

\begin{table}[ht]
\centering
\caption{Per-epoch computational cost during training, measured in seconds.}\label{tab:FES-FM-traincost}
\begin{tabular}{lcccc}
\toprule
 & Müller-Brown  & DW-50D & $\mathbb{R}^2$-3P & $\mathbb{R}^3$-4P \\
\midrule
$T^{\mathrm{train}}_{\mathrm{sim}}$  & 0.417 & 0.286 & 0.735  & 1.098 \\
$T^{\mathrm{train}}_{\mathrm{NETS}}$ & 0.010 & 0.010 & 0.141  & 0.247 \\
$T^{\mathrm{train}}_{\mathrm{FES-FM}}$& 0.010 & 0.009 & 0.008  & 0.009 \\
\bottomrule
\end{tabular}
\end{table}

\subsection{Warm-up process}\label{sec:warmup_process}

Equation \eqref{Jar_reweight} gives an exact reweighting identity for estimating expectations under $\nu_t$, but the variance of the estimator depends on the drift used in the non-equilibrium dynamics. An arbitrary drift may produce a large mismatch between the marginal law of $X_t^{\hat{b}}$ and the desired interpolated distribution $p(x,t)$, leading to high-variance importance weights. To reduce such variance, in experiments where warm-up is used, we follow the NETS full-space training procedure and first train a full-space drift $b_{\theta_0}$ with the loss \eqref{b_loss}. In FES-FM, this NETS-style warm-up stage is used only to obtain an approximately useful drift for reweighting; it is not necessary to fully solve the full-space transport problem. After warm-up, the resulting $b_{\theta_0}$ is fixed and used as $\hat{b}$ in Algorithm~\ref{algo_train}. The alanine dipeptide experiment in Section~\ref{Sec:alanine} skips this warm-up process.

\begin{algorithm}[ht]
\caption{Warm-up process (NETS)}\label{algo_warmup}
\begin{algorithmic}[1]
\STATE \textbf{Initialize:} neural networks $b_{\theta_0}$, $c_{\theta_0}$; warm-up epochs $N_{\mathrm{pre}}$; time steps $K$
\FOR{epoch=$1, \dots, N_{\mathrm{pre}}$}
    \STATE Generate trajectories $\{(X_{t_k},A_{t_k})\}_{1\leq k\leq K}$ by solving \eqref{non_equil_X}--\eqref{non_equil_A} with the full-space drift $b_{\theta_0}$
    \STATE Calculate the NETS full-space loss $\mathcal{L}_0[b_{\theta_0}, c_{\theta_0}]$ in \eqref{b_loss}, where the expectation $\mathbb{E}_{\nu_{t}}[\cdot]$ is estimated using \eqref{Jar_reweight} with $\{(X_{t_k},A_{t_k})\}$
    \STATE Update the parameters of $b_{\theta_0}, c_{\theta_0}$ by gradient descent on $\mathcal{L}_0$
\ENDFOR
\STATE \textbf{Return:} the pretrained drift $b_{\theta_0}$
\end{algorithmic}
\end{algorithm}

\subsection{Müller-Brown potential}


The Müller-Brown surface is defined as

\begin{align}
U_{\mathrm{target}}(x_1, x_2)
=&-200\exp\big(-(x_1-1)^2-10x_2^2\big) -100\exp\big(-x_1^2-10(x_2-0.5)^2\big) \notag \\
&-170\exp\big(-6.5(x_1+0.5)^2+11(x_1+0.5)(x_2-1.5)-6.5(x_2-1.5)^2\big) \notag \\
&+15\exp\big(0.7(x_1+1)^2+0.6(x_1+1)(x_2-1)+0.7(x_2-1)^2\big).
\end{align}
We choose the prior to be a normal distribution with mean vector $(-0.7, 0.7)^T$ and the isotropic standard deviation $0.3$.

The CV map is constrained to satisfy $ \xi_{\theta}(\phi(s)) = s $, where $ \phi(s) \in \mathbb{R}^2 $ denotes the transition path obtained by the string method \citep{weinan2002string}. Furthermore, we enforce that the gradient of the CV map is parallel to $\dot{\phi}(\xi_{\theta}(x))$. We optimize $\xi_{\theta}$ via the following loss function:
\begin{equation}
\int_0^1 \left|\xi_{\theta}(\phi(s)) - s \right|^2  \mathrm{d} s 
+ \mathbb{E} \big\| \nabla \xi_{\theta}(x) \times \dot{\phi}(\xi_{\theta}(x)) \big\|^2,
\end{equation}
where $\times$ denotes the vector cross product, and the expectation is taken over the target distribution. The neural network representing the CV map is also an MLP with two hidden layers and 128 hidden units per layer.

\subsection{High-dimensional example}

We choose the prior to be the standard normal distribution in $\mathbb{R}^n$. The divergence term $\nabla\cdot b_{\theta_0}(x)$ in \eqref{b_loss} and \eqref{non_equil_A} is estimated using the Hutchinson trace estimator, as done in NETS.

\subsection{Many-particle systems}

The divergence term $\nabla\cdot b_{\theta_0}(x)$ is also estimated using the Hutchinson trace estimator.
To endow the neural network $b_{\theta_0}$ with $\mathrm{E}(3)$-equivariance, we adopt the construction of \citet{Shi2021LearningGF}. Specifically, we parameterize $b_{\theta_0}$ as the gradient of a scalar-valued neural network, i.e. $b_{\theta_0}=\nabla \tilde{b}_{\theta_0}$. Here, $\tilde{b}_{\theta_0}$ takes interatomic distances and time as input. Alternative $\mathrm{E}(3)$-equivariant architectures are also available, such as those utilizing graph neural networks \citep{satorras2021n} and those based on alignment with respect to a given reference configuration \citep{liu2025improving,liu2025riemannian}.

For the three-particle system in $\mathbb{R}^2$, the parameters in \eqref{R2_3P_pot} are chosen as $\alpha_1=5000/49,\alpha_2=5000/49,\alpha_3=50$, $r_1=2,r_2=2,r_3=2.4,r_4=3.1$. For the four-particle system in $\mathbb{R}^3$, the parameters in \eqref{R3_4P_pot} are chosen as $\alpha_1=\alpha_2=\alpha_3=\alpha_4=\alpha_5=5000/49,\alpha_6=200$, $r_1=2,r_2=1,r_3=2,r_4=2.236,r_5=2.236,r_6=2.5,r_7=3.0$.

\section{Ablation study}\label{sec:abl}

\begin{figure}[htbp]
\centering
\begin{subfigure}[b]{0.31\textwidth}
\centering
\includegraphics[width=0.98\linewidth]{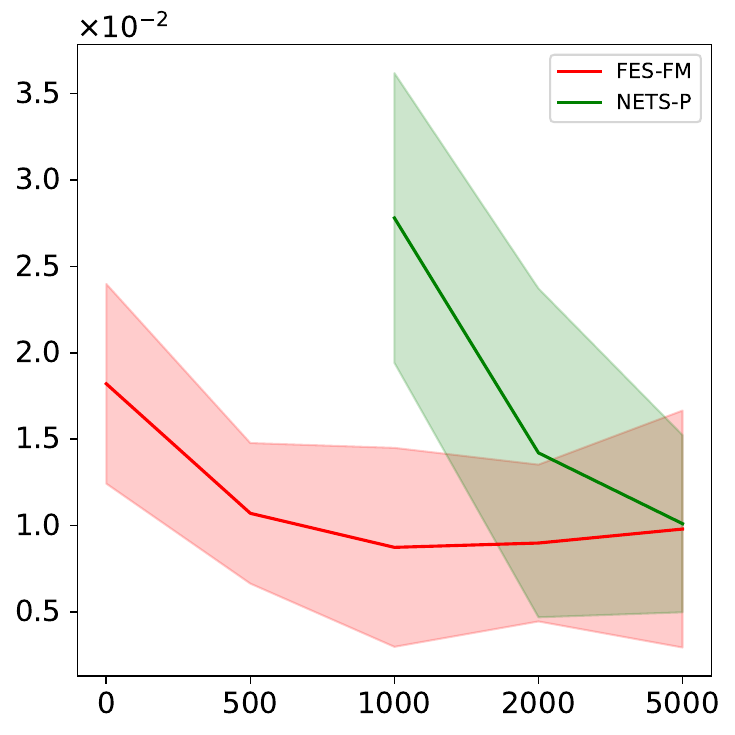}
\caption{$\mathbb{R}^3$-4P, Error vs $N_{\mathrm{pre}}$}\label{fig:abl_dihe_preE}
\end{subfigure}
\begin{subfigure}[b]{0.31\textwidth}
\centering
\includegraphics[width=0.98\linewidth]{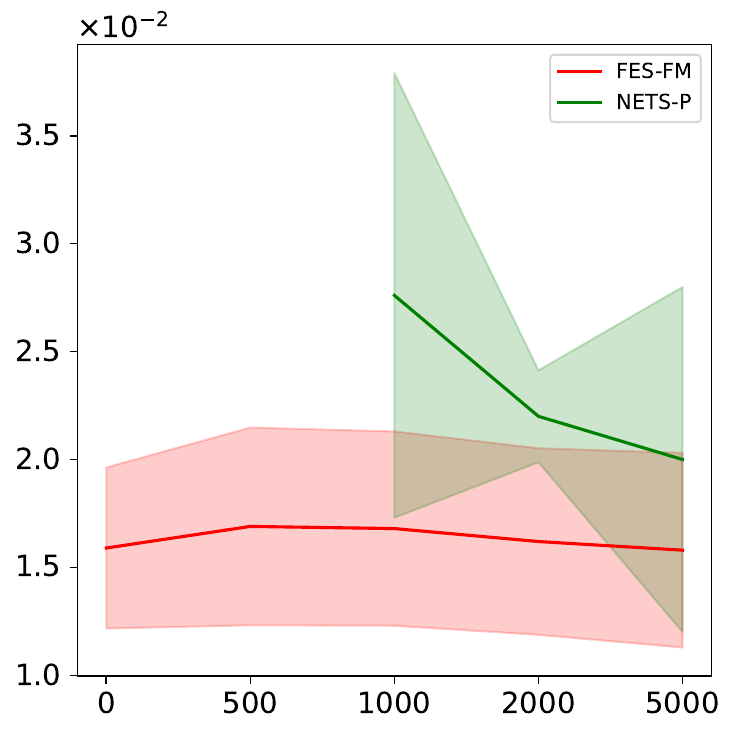}
\caption{DW-50D, Error vs $N_{\mathrm{pre}}$}\label{fig:abl_DW50_preE}
\end{subfigure}
\begin{subfigure}[b]{0.31\textwidth}
\centering
\includegraphics[width=0.98\linewidth]{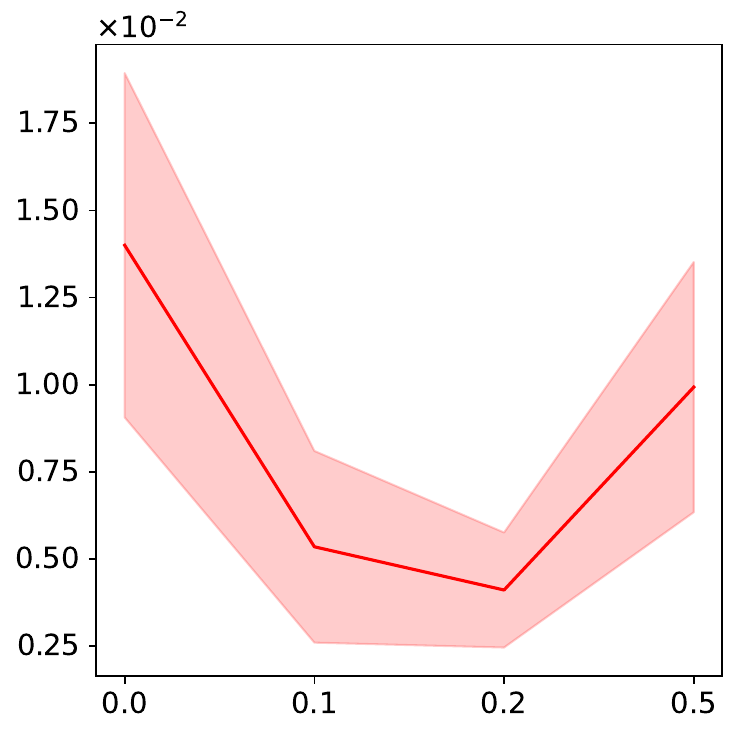}
\caption{Müller-Brown, Error vs $\epsilon$}\label{fig:abl_MB_eps}
\end{subfigure}
\caption{Ablation study. Solid lines show the mean; shaded regions indicate one standard deviation. (a) Error versus $N_{\mathrm{pre}}$ in the $\mathbb{R}^3$-4P experiment: NETS-P (green) uses $N_{\mathrm{pre}}$ warm-up epochs, and FES-FM (red) uses $N_{\mathrm{pre}}$ training epochs. (b) Error versus $N_{\mathrm{pre}}$ in the DW-50D experiment: NETS-P (green) uses $N_{\mathrm{pre}}$ warm-up epochs, and FES-FM (red) uses $N_{\mathrm{pre}}$ training epochs. (c) Error versus $\epsilon$ in the Müller-Brown experiment.}
\end{figure}


We perform an ablation study on the number of warm-up epochs, $N_{\mathrm{pre}}$. In the NETS warm-up stage, $N_{\mathrm{pre}}$ controls the accuracy of the pre-trained drift $b_{\theta_0}$, which in turn affects the variance of the Jarzynski reweighting estimator in \eqref{Jar_reweight}. For the $\mathbb{R}^3$-4P experiment (Figure \ref{fig:abl_dihe_preE}), the distribution of $X_t^{\hat{b}}$ can deviate substantially from the interpolated distribution $p(x,t)$. In this case, warm-up pre-training is beneficial and can noticeably improve performance. In particular, we observe suboptimal performance when $N_{\mathrm{pre}}=0$ or 500 (see Figure~\ref{fig:abl_dihe_preE}). For the DW-50D experiment (Figure \ref{fig:abl_DW50_preE}), the discrepancy between $X_t^{\hat{b}}$ and $p(x,t)$ is small; at a minimum, both distributions cover the relevant region of configuration space. Consequently, increasing $N_{\mathrm{pre}}$ provides little additional benefit. Across all settings, our method yields consistently satisfactory results (see Figure~\ref{fig:abl_DW50_preE}).

Moreover, Figure \ref{fig:abl_MB_eps} shows the effect of the diffusion coefficient $\epsilon$ on the error, highlighting that an appropriate choice of $\epsilon$ is crucial for achieving good performance.

\section{Limitations and future work}

Future work could extend the method to realistic molecular mechanics force fields, such as AMBER or CHARMM, where the dimension of the configuration space is much higher and the energy landscape is more complex. Currently, our method still relies on the NETS high-dimensional sampling procedure during training; a promising direction is to bypass this step and develop methods that learn and sample directly from the free energy surface. Coupling FES sampling with the automatic identification of CVs is also an important direction for future studies.

\bibliographystyle{elsarticle-harv} 

\bibliography{ref}

\end{document}